\newtheorem{lemma}{Lemma}
\newtheorem{theorem}[lemma]{Theorem}
\newtheorem{assumption}{Assumption}
\newtheorem{corollary}[lemma]{Corollary}
\newtheorem{remark}{Remark}
\newcommand{\OO}{\mathcal{O}}
\newcommand\sd[1]{\textcolor{blue}{[SD: #1]}}
\title{Sim and Real: Better Together}
\author{%
  Shirli Di~Castro Shashua \thanks{This research was conducted during an internship in Bosch Center of AI.} \\
  %Department of Electrical Engineering\\
 Technion Institute of Technology\\
  Haifa, Israel \\
  %\texttt{shirlidi@tx.technion.ac.il} \\
  % examples of more authors
  \And
  Dotan Di~Castro \\
  Bosch Center of AI\\
  Haifa, Israel \\
  %\texttt{dotan.dicastro@il.bosch.com} \\
  \AND
  Shie Mannor \\
  %Department of Electrical Engineering\\
  Technion and NVIDIA Research\\
  Israel \\
  %\texttt{shie@ee.technion.ac.il} \\
  % \And
  % Coauthor \\
  % Affiliation \\
  % Address \\
  % \texttt{email} \\
  % \And
  % Coauthor \\
  % Affiliation \\
  % Address \\
  % \texttt{email} \\
}
\begin{document}

\maketitle

\begin{abstract}
Simulation is used extensively in autonomous systems, particularly in robotic manipulation. By far, the most common approach is to train a controller in simulation, and then use it as an initial starting point for the real system. We demonstrate how to learn simultaneously from both simulation and interaction with the real environment. We propose an algorithm for balancing the large number of samples from the high throughput but less accurate simulation and the low-throughput, high-fidelity and costly samples from the real environment. We achieve that by maintaining a replay buffer for each environment the agent interacts with. We analyze such multi-environment interaction theoretically, and provide convergence properties, through a novel theoretical replay buffer analysis.  We demonstrate the efficacy of our method on a sim-to-real environment. %a real robotic system.
\end{abstract}

\section{Introduction}\label{sec:introduction}

Reinforcement learning (RL) is a framework where an agent interacts with an unknown environment, receives a feedback from it, and optimizes its performance accordingly  \cite{sutton2018reinforcement,bertsekas2005dynamic}.
There have been attempts of learning a control policy directly from real world samples  \cite{levine2018learning,yahya2017collective,pinto2016supersizing, kalashnikov2018QT_opt}. However, in many cases, learning from the actual environment may be slow, costly, or dangerous, while learning from a simulated system can be fast, cheap, and safe. The advantages of learning from simulation are counterbalanced by the {\it reality-gap} \cite{jakobi1995noise}: the loss of fidelity due to modeling limitations, parameter errors, and lack of variety in physical properties. The quality of the simulation may vary: when the simulation mimics the reality well, we can train the agent on the simulation and then transfer the policy to the real environment, in a one shot manner (e.g., \cite{andrychowicz2020learning}). 
% This can work well in specific setups \cite{andrychowicz2020learning}. 
However in many cases, simulation demonstrates low fidelity which leads to the following question: \textit{Can we mitigate the differences between real environments ("real") and simulations ("sim") thereof, so as to train an agent that learns from both, and performs well in the real one?}

In this work, we propose to learn simultaneously on real and sim, while controlling the rate in which we collect samples from each environment and controlling the rate in which we use these samples in the policy optimization. This synergy offers a speed-fidelity trade-off and harnesses the advantage of each domain. Moreover, the simulation speed encourages exploration that helps to accelerate the learning process. The real system in turn can improve exploitation in the sense that it mitigates the challenges of sim-to-real policy transfer, and encourages the learner to converge to relevant solutions. A general scheme describing our proposed setup is depicted in Figure \ref{fig:s2r_diagram}. In a nutshell, there is a single agent interacting with $K$ environments (on the left). Each sample provided by an environment is pushed into a corresponding replay buffer (RB). On the right, the agent pulls samples from the RBs and is trained on them. In the sim-to-real scheme, $K=2$.

%\begin{figure}[t]
%\begin{center}
%\centerline{\includegraphics[width=0.7\columnwidth]{}}
%\caption{Mixing $K$ environments scheme. The agent selects an environment $M_i$ with probability $q_i$ and interacts with it. Simultaneously, the agent chooses RB(j) with probability $\beta_j$ and samples from this replay buffer a stored transition $\tilde{\mathcal{O}}$, which is used for estimating the TD error and update the policy parameters.}
%\label{fig:s2r_diagram}
%\end{center}
%\end{figure}

%\begin{figure}[ht]
%\vskip 0.2in
%\begin{center}
%\centerline{\includegraphics[width=0.75\column%width]{pictures/ExperimentalSetupDiagram.png}}
%\caption{Setup illustration of our proposed %method where an agent is learning simultaneously from both simulation and real robotic system.}
%\label{fig:experimental_setup}
%\end{center}
%\vskip -0.2in
%\end{figure}

%\begin{wrapfigure}{r}{0.6\textwidth}
%  \begin{center}
%    \includegraphics[width=0.6\columnwidth]{pictures/s2r_diagram.pdf}
%  \end{center}
%\caption{Mixing $K$ environments scheme. The agent selects an environment $M_i$ with probability $q_i$ and interacts with it. Simultaneously, the agent chooses RB(j) with probability $\beta_j$ and samples from this replay buffer a stored transition $\tilde{\mathcal{O}}$, which is used for estimating the TD error and update the policy parameters.}
%\label{fig:s2r_diagram}
%\end{wrapfigure}

\begin{figure}[t]
\centerline{\includegraphics[width=0.6\columnwidth]{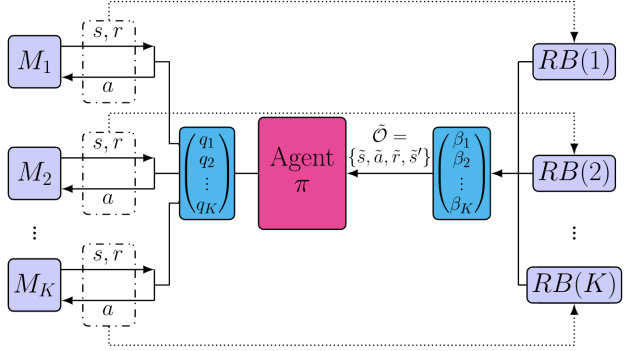}}
\caption{Mixing $K$ environments scheme. The agent selects an environment $M_i$ with probability $q_i$ and interacts with it. Simultaneously, the agent chooses RB(j) with probability $\beta_j$ and samples from this replay buffer a stored transition $\tilde{\mathcal{O}}$, which is used for estimating the TD error and update the policy parameters.}
\label{fig:s2r_diagram}
\end{figure}

In the specific scheme for mixing real and sim samples in the learning process,  separate probability measures for collecting samples and for optimizing parameters policies are used. The off-policy nature of our scheme enables separation between real and sim samples which in turn helps controlling the rate of real samples used in the optimization process. In this work we discuss two RL algorithms that can be used with this scheme: (1) off-policy linear actor critic with mixing sim and real samples and (2) Deep Deterministic Policy Gradient (DDPG; \cite{lillicrap2015continuous}) mixing scheme variant based on neural networks. We analyze the asymptotic convergence of the linear algorithm and demonstrate the mixing samples variant of DDPG in a sim-to-real environment.

The naive approach in which one pushes the state-action-reward-next-state tuples into a single shared replay buffer is prone to failures due to the imbalance between simulation and real roll-outs. To overcome this, we maintain separate replay buffers for each of the environments (e.g., in the case of a single robot and a simulator we would have two replay buffers). This allows us to extract the maximum valuable information from reality by distinguishing its tuples from those generated by other environments, while continuously improving the agent using data from all input streams. Importantly, although the rate of samples is skewed in favor of the simulation, the learning may be carried out using a different rate. In a sense, the mechanism we suggest is a version of the \emph{importance sampling} technique \cite{bucklew2013introduction}.

%\ys{is this paragraph necessary? perhaps it should come earlier} DD: yes
Our main contributions in this work are as follows:
\begin{enumerate}
    \item We present a  method for incorporating real system samples and simulation samples in a policy optimization process while distinguishing between the rate of collecting samples and the rate of using them.
    \item We analyze the asymptotic convergence of our proposed mixing real and sim scheme.
    \item To the best of our knowledge, we provide for the first time theoretical analysis of the dynamics and properties of replay buffer such as its Markovity and the explicit probability measure induces by the replay buffer.  
    \item We demonstrate our findings in a simulation of sim-to-real, with two simulations where one is a distorted version of the other and analyze it empirically.
\end{enumerate}
%The paper is organized as follows. In Section \ref{sec:related_work} we review related work for this paper. In Section \ref{sec:setup} we describe the mathematical framework that we use throughout the paper. In Section \ref{section:algorithm} we describe our proposed algorithm and discuss qualitatively its behavior. In Section \ref{sec:convergence}, we analyze the algorithm. In Section \ref{sec:evaluation} we show in Gym environment the applicability of our work. We conclude in Section \ref{sec:conclusions} and propose future work directions.

\section{Related Work}\label{sec:related_work}

% DD: Papers to talk about in the Related Work section
% Sim-to-Real via Sim-to-Sim: Data efficient robotic grasping via randomized
% Source with many sim2real sources: https://sim2realai.github.io/welcome-to-sim2real-AI/

\textbf{Sim-to-Real:} Sim-to-Real is a long investigated topic in robotics where one aims to reduce the reality gap between the real system and its digital twin implementation. 
A general framework where we transfer results from one domain to another is \emph{domain adaptation}. In vision, this approach have helped to gain state-of-the-art results \cite{ganin2017domain, shu2018dirt, long2015learning, bousmalis2016domain, kim2017learning, shrivastava2017learning}.
%where domain adaptation pertains to the perception gap, and addresses it by learning a transfer function to translate an image from one domain into the other. 
In our work, we focus on the physical aspects of the sim-to-real gap. %, and not on the perception challenge.
Related to domain adaptation, is the approach of \emph{domain randomization}, where the randomization is done in simulation in order to robustify and enhance the detection and object recognition capability \cite{tobin2017domain,sadeghi2016cad2rl,james2017transferring,vuong2019pick}.
Recently, \cite{james2019sim} proposed a method where both simulation and reality are adapted to a common domain. Andrychowicz et al. \cite{andrychowicz2020learning} extensively randomize the task of reaching a cube pose where one-shot transfer is achieved but with large sample complexity. Randomization may also be applied to dynamics, e.g., \cite{peng2018sim}, where robustness to inaccuracy in real world parameters is achieved.

Another approach in Sim-to-Real is how to change the simulation in the light of real samples. In \cite{chebotar2019closing} the agent learns  mainly from simulation but its parameters are updated to match the behavior in reality by  reducing the difference between simulation and reality roll-outs. Our method is a direct approach that incorporates phenomena that is difficult to simulate accurately. In Bayesian context, \cite{ramos2019bayessim} provide a principled framework to reason about the uncertainty in simulation parameters. Kang et al. \cite{kang2019generalization} investigated how real system and simulation data can be combined in training deep RL algorithms. They separate between the data types by using real data to learn about the dynamics of the system, and simulated data to learn a generalizing perception system. Our method mix real and simulation data by controlling the rate of streaming each data type into the learning agent.

\textbf{Replay Buffer analysis:} Large portion of RL algorithms use replay buffers \cite{lin1993reinforcement,mnih2013playing} but here we review only works that provide some analysis. Several works study the effect of replay buffer size on the agent performance \cite{zhang2017deeper,liu2018effects}. Our focus is the effect of controlling the rate of collecting samples and the rate of using them in the optimization process. Fedus et al. \cite{fedus2020revisiting} investigated the effect of the ratio between these rates on the learning process through simulated experiments, while our focus is on the theoretical aspects. Other works studied the criteria for prioritizing transitions to enhance learning  \cite{schaul2015prioritized,pan2018organizing,zha2019experience}. In case of multiple agents that share their policy, Horgan et al. \cite{horgan2018distributed} argue in favor of a shared replay buffer for all agents and a prioritizing mechanism. We, on the other hand, emphasize the advantage of separating replay buffers when collecting samples from different environments to enable a mixing management in the learning process.  

\textbf{Stochastic Approximation:} Our proposed algorithm is based on the Stochastic Approximation method \cite{kushner2012stochastic}. Konda and Tsitsiklis \cite{konda2000actor} proposed the actor-critic algorithm, and established the asymptotic convergence for the two time-scale actor-critic, with TD($\lambda$) learning-based critic. Bhatnagar et al. \cite{bhatnagar2008incremental} proved the convergence result for the original actor-critic and natural actor-critic methods. Di~Castro and Meir \cite{dicastro2010convergent} proposed a single time-scale actor-critic algorithm and proved its convergence. Recently, several finite sample analyses were applied by \cite{wu2020finite, zou2019finite,dalal2018finite} and more but these works have not analyzed the RB asymptotic behavior while we do.

\section{Setup}
\label{sec:setup}
We model the problem using a Markov Decision Process (MDP; \cite{puterman1994markov}), where $\mathcal{S}$ and $\mathcal{A}$ are the state space and action space, respectively. We let $P(s'|s,a)$ denote the probability of transitioning from state $s\in \mathcal{S}$ to state $s'\in \mathcal{S}$ when applying action $a \in \mathcal{A}$.
The MDP measure $P(s'|s, a)$ and the policy measure $\pi_\theta(a|s)$ induce together a Markov Chain (MC) measure $P_\theta(s'|s)$ ($P_\theta$ is matrix form). We consider a probabilistic policy  $\pi_\theta(a|s)$, parameterized by $\theta\in \Theta \subset \mathbb{R}^d$ which expresses the probability of the agent to choose an action $a$ given that it is in state $s$. We let 
$\mu_{\theta}$ denote the stationary distribution induced by the policy $\pi_\theta$.  The reward function is denoted by $r(s,a)$.
Throughout the paper we assume the following.
\begin{assumption}
\label{assumption:theta_is_compact}
1. The set $\Theta$ is compact.
2. The reward $|r(\cdot,\cdot)| \le 1$ for all $s\in S, a\in A$.
\end{assumption}
%We begin the analysis with the following assumption.
\begin{assumption}
\label{ass:aperiodic}
For any policy $\pi_\theta$, the induced Markov chain of the MDP process $\{s_\tau\}_{\tau\ge 0}$ is
irreducible and aperiodic.
\end{assumption}
The goal of the agent is to find a policy that maximizes the \emph{average reward} that the agent receives during its interaction with the environment \cite{puterman1994markov}. Under an ergodicity assumption, the average reward over time eventually converges to the expected reward under the stationary distribution \cite{bertsekas2005dynamic}
\begin{equation}
\label{eq:actor_loss}
    \eta_{\theta} \triangleq \lim_{T \rightarrow \infty} \frac{\sum_{\tau=0}^T r(s_\tau, a_\tau)}{T} = \mathbb{E}_{s \sim \mu_\theta, a \sim \pi_\theta} [r(s,a)].
\end{equation}
The state-value function evaluates the overall expected accumulated rewards given a starting state $s$ and a policy $\pi_\theta$
\begin{equation}
\label{eq:V_definition}
    V^{\pi_\theta} (s)  \triangleq \mathbb{E} \left[ \left. \sum_{\tau=0}^{\infty} ( r(s_\tau, a_\tau) - \eta_{\theta} ) \right| s_0=s, \pi_\theta \right],
\end{equation}
where the actions follow the policy $a_\tau \sim \pi_\theta(\cdot|s_\tau)$ and the next state follows the transition probability $s_{\tau+1} \sim P(\cdot|s_\tau, a_\tau)$. Denote $\textbf{v}^\theta$ to be the vector value function defined in \eqref{eq:V_definition}. Therefore, the vectorial Bellman Equation (BE) for a fixed policy $\pi_\theta(\cdot, \cdot)$ is $\textbf{v}^\theta = r-\eta_\theta + P_\theta \textbf{v}^\theta$, where $r$ is a vector of rewards for each state \cite{puterman1994markov}. We recall that the solution to the BE is unique up to an additive constant. In order to have a unique solution, we choose a state $s^*$ to be of value $0$, i.e., $V^{\pi_\theta} (s^*)=0$ (due to Assumption \ref{ass:aperiodic}, $s^*$ can be any of $s\in S$). 

In our specific setup, we consider a model where there are $K$ MDPs, denoted by $M_k$, all share the same state space $\mathcal{S}$, action space $\mathcal{A}$, and reward function $r(s,a)$. The environment dynamics, though, are different, and are denoted by a transition function $P_k(\cdot|\cdot,\cdot)$. Together with a shared policy $\pi_\theta(\cdot|\cdot)$, each $M_k$ is induced by a state transition measure $P_{\theta,k}(s'|s)$ and a stationary distribution
$\mu_{\theta, k}$. Let $\eta_{\theta,k} = \mathbb{E}_{s \sim \mu_{\theta,k}, a \sim \pi_\theta} [r(s,a)]$ and define the average reward over $K$ environments,
\begin{equation}
    \label{eq:eta_average_on_k}
    \bar{\eta}_\theta = \mathbb{E}_{k \sim \beta, s \sim \mu_{\theta,k}, a \sim \pi_\theta} [r(s,a)] = \sum_{k=1}^K \beta_k \eta_{\theta,k}.
\end{equation}
The following assumption resembles Assumption \ref{ass:aperiodic} for $K$ environments.
\begin{assumption}
\label{ass:aperiodic_k_mdps}
For any policy $\pi_\theta$, the induced Markov chain of MDP $M_k$ is
irreducible and aperiodic for all $k=1 \ldots K$.
\end{assumption}

We define $\nu_k$ to be the \emph{throughput} of $M_k$ and it is defined as the number of samples MDP $M_k$ provides for a unit time. In sim-to-real context, this setup can practically handle several robots and several simulation instances. We assume for the sim-to-real scenario that $\nu_s > \nu_r$. 

Since the samples from real arrive at a lower throughput than the sim, if we push the samples into two separate Replay Buffers  (RB; \cite{lin1993reinforcement,mnih2013playing}) based on their sources, we can leverage the relatively scarce, but valuable samples that originated in the real system. This observation is the main motivation for our "Mixing Sim and Real" scheme, presented in the next section.

 %\footnote{We note that in many cases, a generalization to also different state space and actions space is possible, but we omit this case and leave it to future work.}. 

%we assume that reality provides fewer samples then simulation for the same period of time. Therefore, for $K=2$, if we denote the throughput of reality with $p_r$ and the throughput of simulation with $p_s$ we have $p_s>p_r$.

\section{Mixing Sim and Real Algorithm} \label{section:algorithm}

In order to reconcile the dynamics disparity, we propose our \emph{Mixing Sim and Real Algorithm with Linear Actor Critic}, presented in Algorithm \ref{alg:s2r} and described in Figure \ref{fig:s2r_diagram}. %We begin by describing the general concept and then derive the specifics for the Sim-to-Real setup. 
We consider $K$ environments, modeled as MDPs, $M_1,\ldots,M_K$, where the agent maintain a replay buffer $RB(k)$ for each MDP, respectively. For the sake of analysis simplicity, we replace $\{\nu_k\}$ with the following random variable. %At the beginning of an episode, 
The agent chooses an environment to communicate with according to 
%\begin{equation} 
% \label{eq:I_categorial_def}
%    I \sim Categorial(q_1, \ldots, q_K)
%\end{equation}
$I \sim Categorial(q_1, \ldots, q_K)$
where $q \triangleq [q_1, \ldots, q_K]$, $q_i \ge 0$, and $\sum_i q_i = 1$. The agent collects transitions $\{s_i, a_i, r_i, s'_i\}$ from the chosen environment and stores them in the corresponding $RB(i)$. In order to approximate the rates $\{\nu_k\}_{k=1}^{K}$ correctly, we choose $q_i=\nu_i / \sum_k \nu_k $ for the agent to interact according to the rates. %In this setting, the mutual MDP transition probabilities are defines as $P(s'|s,a) = \sum_{i=1}^K q_i P_i(s'|s,a)$, for $s', s \in \mathcal{S}$ and $a \in \mathcal{A}$.

\begin{algorithm}[bht]
\caption{Mixing Sim and Real with Linear Actor Critic} 
\label{alg:s2r}
\begin{algorithmic}[1]
\STATE Initialize Replay Buffers $\textrm{RB}(k)$ with size $N$ and  initialize $t_k=0$ for $k=1, \ldots K$.
\STATE Initialize actor parameters $\theta_0$, critic parameters $v_0$ and average reward estimator $\eta_0$.
%\Algphase{Shirli}
\FOR{$\tau=0, \ldots$}
    \STATE Sample $i \sim q$, interact with $M_{i}$ according to policy $\pi_{\theta_\tau}$ and add the transition $\{s_{i, t_i}, a_{i, t_i}, r_{i, t_i}, s_{i, t_i+1} \}$ to $\text{RB}(i)$. Increment $t_i \gets t_i + 1$. 
    \STATE Sample $j \sim \beta$ and choose $N_{\text{batch}}$ transitions from $RB(j)$ denoted as $\{\tilde{\mathcal{O}}^z_{j, n}(\tau)\}_{z=1}^{N_{\text{batch}}}$. 
    %\STATE Train the agent parameters using the sampled batch: $\pi_{\theta_{\tau+1}} \leftarrow$  \emph{optimization scheme}($\{\tilde{\mathcal{O}}^z_{j,n}(\tau)\}_{z=1}^{N_{\text{batch}}}$)
    \STATE $\delta(\tilde{\mathcal{O}}^z) = \tilde{r}^z - \eta_\tau + \phi(\tilde{s}'^{z})^\top v_\tau  - \phi(\tilde{s}^z)^\top v_\tau$ \label{line:TD}
        \STATE Update average reward\\ $\eta_{\tau+1} = \eta_\tau + \alpha^{\eta}_\tau (\frac{1}{N_{\text{batch}}}\sum_z \tilde r^z - \eta_\tau) $ \label{line:average_reward}
        \STATE Update critic $v_{\tau+1} = v_{\tau} + \alpha^v_\tau \frac{1}{N_{\text{batch}}} \sum_z \delta(\tilde{\mathcal{O}}^z) \phi(\tilde{s}^z)$ \label{line:critic}
        \STATE Update actor  $\theta_{\tau+1} = \Gamma \big( \theta_{\tau} - \alpha^{\theta}_\tau \frac{1}{N_{\text{batch}}} \sum_z \delta(\tilde{\mathcal{O}}^z) \nabla_\theta \log \pi_\theta (\tilde{a}^z | \tilde{s}^z) \big)$ \label{line:actor}
\ENDFOR
\end{algorithmic}
\end{algorithm}

\iffalse
\begin{algorithm}
%\floatname{algorithm}{Procedure}
    \caption{\emph{Optimization scheme}: RB-Based Linear Actor-Critic}
    \label{alg:linear_actor_critic}
    \begin{algorithmic}[1]
    %\State Two environments $M_{\text{real}}, M_{\text{sim}}$ 
    \STATE Initial actor parameters $\theta_0$, critic parameters $v_0$ and average reward estimator $\eta_0$
    %\State $t=0$ \Comment {$t$ is the optimization steps counter}
    \FOR {$\tau=0$, \ldots }
        \STATE Receive a batch of samples   $\{\tilde{\mathcal{O}}^z\}_{z=1}^{N_{\text{batch}}}$.
        \STATE $\delta(\tilde{\mathcal{O}}^z) = \tilde{r}^z - \eta_\tau + \phi(\tilde{s}^{'z})^\top v_\tau  - \phi(\tilde{s}^z)^\top v_\tau$ \label{line:TD}
        \STATE Update average reward\\ $\eta_{\tau+1} = \eta_\tau + \alpha^{\eta}_\tau (\frac{1}{N_{\text{batch}}}\sum_z \tilde r^z - \eta_\tau) $ \label{line:average_reward}
        \STATE Update critic $v_{\tau+1} = v_{\tau} + \alpha^v_\tau \frac{1}{N_{\text{batch}}} \sum_z \delta(\tilde{\mathcal{O}}^z) \phi(\tilde{s}^z)$ \label{line:critic}
        \STATE Update actor  $\theta_{\tau+1} = \Gamma \big( \theta_{\tau} + \alpha^{\theta}_\tau \frac{1}{N_{\text{batch}}} \sum_z \delta(\tilde{\mathcal{O}}^z) \nabla_\theta \log \pi_\theta (\tilde{a}^z | \tilde{s}^z) \big)$ \label{line:actor}
    \ENDFOR 
    \ENSURE $\pi_{\theta_\tau}$
    \end{algorithmic}
\end{algorithm}

\fi

\newcommand{\TO}{\tilde{\mathcal{O}}}
\newcommand{\TS}{\tilde{s}}
\newcommand{\TSP}{\tilde{s}'}
\newcommand{\TA}{\tilde{a}}

We train the agent in an off-policy manner. The agent selects $RB(j)$ for sampling the next batch for training according to 
%\begin{equation}
%\label{eq:J_categorial_def}
%J \sim Categorial(\beta_1, \ldots, \beta_K)
%\end{equation}
$J \sim Categorial(\beta_1, \ldots, \beta_K)$
where $\beta \triangleq [\beta_1, \ldots, \beta_K]$, $\beta_j \ge 0$, and $\sum_j \beta_j = 1$. This  distribution remains static, and hence the selections in time are i.i.d\footnote{We note that one could remove this restriction and think of other schemes in which the replay buffer selection distribution changes over time based on some prescribed optimization goal, cost, etc.}. In addition, the $\beta$ distribution that selects which samples to train over should be different than the $q$ distribution that controls the throughput each environments pushes samples to the RB. In that way, scarce samples from the real environment can get higher influence on the training. %This mixing scheme of the observed transitions is described in Algorithm \ref{alg:s2r}

Once a RB is selected, the sampled batch is used for optimizing the actor and the critic parameters. In this work, we propose a two time scale linear actor critic optimization scheme \cite{konda2000actor}, which is an RB-based version of \cite{bhatnagar2008incremental} Algorithm. We analyze its convergence properties in Section \ref{sec:convergence}. We note, however, that other optimization schemes can be provided, such as DDPG \cite{lillicrap2015continuous}, which we use in our experiments. 

We define a tuple of indices $(k,n)$ where $k$ corresponds to $RB(k)$ and $n$ corresponds to the $n$-th sample in this $RB(k)$. In addition, it corresponds to time $t(k,n)$ where this is the time when the agent interacted with the $k$-th MDP and the $n$-sample was added to $RB(k)$. Let $\TO_{k,n} (\tau) \triangleq \{\TS_{k,n}, \TA_{k,n}, \tilde r_{k,n}, \TSP_{k,n} \}$ be a transition sampled at time $\tau$ from $RB(k)$. Whenever it is clear from the context, we simply use $\TO$. 
%\begin{remark}
%Note that $\TO_{k,n} (\tau)$ is a transition sampled at time $\tau$ from $RB(k)$, and it is \emph{not} necessarily the last transition in the Markov chain $\mathcal{O}_{\tau} = \{s_{\tau}, a_{\tau}, s_{\tau+1}, r_{\tau}\}$. Clearly, $t(k,n) < \tau$. 
%\end{remark}

The temporal difference (TD) error $\delta (\TO)$ is a random quantity based on a single sampled transition from $RB(k)$,
\begin{equation}
    \delta(\TO) = r(\TS, \TA) - \eta + \phi(\TSP)^\top v -  \phi(\TS)^\top v,
\end{equation}
where $\hat{V}^{\pi_\theta}_{v} (s) = \phi(s)^\top v $ is a linear approximation for $V^{\pi_\theta}(s)$,  $\phi(s)\in \mathbb{R}^d$ is a feature vector for state $s$ and $v \in \mathbb{R}^d$ is a parameter vector. In Algorithm \ref{alg:s2r}, average reward, critic and actor parameters are updated based on the TD error (see lines \ref{line:average_reward} - \ref{line:actor}).  Note that for the actor updates, we use a projection $\Gamma (\cdot)$ that projects any $\theta \in \mathbb{R}^d$ to a compact set  $\Theta$ whenever $\theta \notin \Theta$.

In order to gain understanding of our proposed setup, in the next section we characterize the behaviour of the iterations in Algorithm \ref{alg:s2r}.% , when provided with transitions from mixed replay buffers.  %limitations of the setup and the algorithm that is executed in the coordinator agent. 
%The following discussion applies to a restricted version of our setting, and is intended to gain helpful intuition about the process, and does not constitute a formal proof for the general case.

%\subsection{An overview of Algorithm \ref{alg:s2r} with Linear Actor-Critic Optimization Scheme} \label{sec:analysis}

\section{Convergence Analysis for Mixing Sim and Real with Linear Approximation} \label{sec:convergence}
The standard tool in the literature for analyzing iterations of processes such as two time scale Actor-Critic in the context of RL is \emph{SA; Stochastic Approximation} \cite{kushner2003stochastic, borkar2009stochastic, bertsekas1996neuro}. This analysis technique includes two parts: proving the existence of a fixed point, and bounding the rate of convergence to this fixed point.  By far, the most popular methods for proving convergence is the \emph{Ordinary Differential Equation (ODE) method}. Usually, the iteration should demonstrate either some monotonicity property, or a contraction feature in order for the iteration to converge. 

Although in practice such algorithms (after some tuning) usually converge to an objective value, it is not always guaranteed. To achieve that in a stochastic approximation setup, the main known result shows that the iteration can be decomposed into a deterministic function, which depends only on the problem parameters, and a martingale difference noise, which is bounded in some way. 

In this section we show that the iterations of Algorithm \ref{alg:s2r} converge to a stable point of a corresponding ODE. We begin with showing that the process of sampling transitions from RBs is a Markov process. Afterward, we show that if the original Markov chain is irreducible and aperiodic, then also the RBs Markov process is irreducible and aperiodic. This property is required for proving the convergence of the iterations in Algorithm \ref{alg:s2r} using SA tools. We conclude this section with showing that if in some sense sim is close to real, then the properties of the mixed process is close to the properties of both sim and real. 

\subsection{Asymptotic Convergence of Algorithm \ref{alg:s2r}}
\label{subsec:Algorithm_1_asymptotic_convergence}
Let $\text{RB}(k)$ be a replay buffer storing the last $N$ transitions from MDP $k$. Let $\text{RB}_\tau(k)$ be the state of $\text{RB}(k)$ at time $\tau$, i.e., $RB_\tau(k) \triangleq \{\mathcal{O}_{k,1}, \ldots, \mathcal{O}_{k,N} \},$ where $\mathcal{O}_{k,n} = \{ s_{k,n}, a_{k,n}, r_{k,n}, s'_{k,n}\}$ is a transition tuple pushed at some time $t(k,n) < \tau$. We denote the collection of all $RB_\tau(k)$ as $\bigcup\limits_{k=1}^{K} RB_\tau(k)$. 
%Based on \eqref{eq:I_categorial_def} and \eqref{eq:J_categorial_def}, 
We define $I_\tau$ and $J_\tau$ be i.i.d  random processes based on  $I$ and $J$, respectively. We define $Y_\tau$ to be the process induced by Algorithm \ref{alg:s2r}, i.e.,  
\begin{equation}
\label{eq:Y_t}
    Y_{\tau} = \left[ \bigcup\limits_{k=1}^{K} RB_\tau(k),   I_\tau, J_\tau \right].
\end{equation}
The next lemma states the $Y_\tau$ is Markovian. The proof is deferred to the Supplementary material \ref{appendix:proof_Y_is_markovian}.
\begin{lemma}[$Y_\tau$ induced by Algorithm \ref{alg:s2r} is Markovian]
\label{lemma:Alog1_is_markovian}
1. The random process $Y_\tau$ is a Markovian.
2. Under Assumption \ref{ass:aperiodic_k_mdps}, there exists some $\tau'>0$ such that $Y_\tau$ is irreducible and aperiodic for $\tau \ge \tau'$.
\end{lemma}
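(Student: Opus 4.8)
The plan is to treat $Y_\tau$ as the state of a controlled Markov chain in which the policy parameter $\theta$ is held fixed, as is standard when one isolates the ``fast'' sampling process in a two--time--scale stochastic--approximation argument; the slowly varying iterates $\theta_\tau,v_\tau,\eta_\tau$ ride on top of this chain and do not enter the argument below (strictly, $Y_\tau$ alone is Markov only once the generating policy is fixed, since a fresh transition is drawn from $\pi_\theta$).

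For part 1 I would write down the one--step transition kernel of $Y_\tau$ explicitly and observe that it depends on the past only through $Y_\tau$. Given $Y_\tau=[\bigcup_k RB_\tau(k),I_\tau,J_\tau]$, the next state is produced by three mutually independent sources of randomness: the fresh environment choice $I_{\tau+1}\sim q$, the fresh buffer choice $J_{\tau+1}\sim\beta$ (both i.i.d.\ by construction), and a single new transition generated in the chosen environment. The key observation is that the starting state needed to generate that transition is \emph{recoverable from $RB_\tau$ itself}: since environment $k$ always resumes from where it stopped, the current state of $M_{I_{\tau+1}}$ is exactly the next--state field $s'$ of the most recent tuple stored in $RB_\tau(I_{\tau+1})$. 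The new tuple is then drawn as $a\sim\pi_\theta(\cdot\mid s)$, $s'\sim P_{I_{\tau+1}}(\cdot\mid s,a)$, $r=r(s,a)$, and pushed into the FIFO buffer, the eviction of the oldest tuple and the shift being deterministic functions of $RB_\tau$. Hence the conditional law of $Y_{\tau+1}$ given the whole history equals its conditional law given $Y_\tau$, with kernel proportional to $q_{i'}\beta_{j'}\pi_\theta(a\mid s)P_{i'}(s'\mid s,a)$; this is the Markov property.

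For part 2 I would first identify the recurrent part of the state space. The buffers start empty and fill only when their environment is selected, so for a transient (random but a.s.\ finite) period the configuration has partially filled buffers; let $\tau'$ be the first time all $K$ buffers are simultaneously full. Because every $q_k>0$, each buffer is selected $N$ times in finite time a.s., so $\tau'<\infty$ a.s., and once reached the process never leaves the set $\mathcal Y$ of full--buffer configurations. It is on $\mathcal Y$ that I prove irreducibility and aperiodicity. A point of $\mathcal Y$ is a list of $K$ contiguous trajectory segments of length $N$ (consecutive stored tuples chain up, $s'_{k,n}=s_{k,n+1}$, because a single environment advances consecutively) together with the two indices. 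For irreducibility I would reach an arbitrary target from an arbitrary current configuration by selecting the environments one at a time: conditioning on the positive--probability selection sequence that picks $M_1$ for $m_1$ consecutive steps, then $M_2$ for $m_2$ steps, and so on, I use irreducibility of the induced chain $P_{\theta,k}$ (Assumption \ref{ass:aperiodic_k_mdps}) to navigate each $M_k$ from its current end--state to the initial state of the target segment and then trace out that segment, overwriting $RB(k)$ exactly; the two target indices are matched on the final step using $q,\beta>0$.

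For aperiodicity I would exhibit return times to a fixed configuration whose greatest common divisor is $1$. Reconstructing buffer $k$ requires navigating $M_k$ from its end--state back to the start of its stored segment and then re--laying the $N$ stored tuples; since $P_{\theta,k}$ is irreducible and aperiodic, such navigation paths exist of two lengths differing by one, so the total return time takes two consecutive integer values, forcing period $1$. The step I expect to be the main obstacle is precisely this bookkeeping around the recurrent class: making ``irreducible and aperiodic for $\tau\ge\tau'$'' precise given that $\tau'$ is itself a random filling time, and correctly restricting to the constrained space of consistently chained buffer configurations rather than arbitrary $KN$--tuples. A secondary technical point, if $\mathcal S$ or $\mathcal A$ is continuous, is that ``irreducible and aperiodic'' must be read in the $\psi$--irreducibility sense, whereupon the positive--probability reachability arguments are replaced by positivity of the transition densities inherited from $P_{\theta,k}$.
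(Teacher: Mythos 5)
Your proposal is correct and follows essentially the same route as the paper's proof: for part~1 you compute the one-step kernel and show it factors as $q_{i}\,\beta_{j}\,\pi_\theta(a\mid s)\,P_{i}(s'\mid s,a)$ with the generating state recovered from the most recent tuple in the selected buffer, which is exactly the paper's decomposition (up to an inessential indexing convention of whether the buffer written at step $\tau$ is $RB(I_\tau)$ or $RB(I_{\tau+1})$), and your caveat about fixing $\theta$ matches how the paper implicitly treats the policy. The only substantive difference is in part~2, where the paper stops at the factored kernel and asserts irreducibility and aperiodicity from Assumption~\ref{ass:aperiodic_k_mdps}, whereas you actually carry out the argument: identifying the recurrent class of consistently chained full-buffer configurations, the environment-by-environment overwriting construction for reachability (which needs $q_k>0$, a condition the paper also uses implicitly), and the two-consecutive-return-times argument for period one — this fills in exactly the bookkeeping the paper leaves to the reader.
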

Next, we present several assumptions that are necessary for proving the convergence of Algorithm \ref{alg:s2r}. The first assumption is a standard requirement for policy gradient methods.
\begin{assumption}
\label{ass:continous_policy}
For any state–action pair $(s,a)$, $\pi_\theta(a|s)$ is continuously differentiable in the parameter $\theta$.
\end{assumption}
Proving convergence for a general function approximation is hard. In our case we demonstrate the convergence for a linear function approximation (LFA; \cite{bertsekas1996neuro}). %We consider a linear function approximation $V(s) \approx \sum_{i=1}^{d}v_i \phi_i(s)$ where $\phi_i(\cdot): S \rightarrow \mathbb{R}$. 
In matrix form, it can be expressed as $V=\Phi v$ where $\Phi \in \mathbb{R}^{|S| \times d}$. % and $v \in \mathbb{R}^d$. 
The following assumption is needed for the uniqueness of the convergence point of the critic.
\begin{assumption}
\label{ass:independent_features}
1. The matrix $\Phi$ has full rank. 
2. The functions $\phi(s)$ are Liphschitz in $s$ and bounded. % with a coefficient $L_\phi$, and bounded with coefficient $B_\phi$. 
3. For every $v \in \mathbb{R}^d$, $\Phi v \neq e$ where $e$ is a vector of ones. 
\end{assumption}
In order to get a \emph{with probability 1} using the SA convergence, the following standard assumption is needed. Note that in the actor-critic setup we need two time-scales convergence, thus, in this assumption the critic is a ‘faster’ recursion than the actor.
\begin{assumption}
\label{ass:step_size}
The step-sizes $\{ \alpha^{\eta}_\tau\}$, $\{ \alpha^{v}_\tau\}$, $\{ \alpha^{\theta}_\tau\}$, $\tau \ge 0$ satisfy $\sum_\tau^\infty \alpha^{\eta}_\tau = \sum_\tau^\infty \alpha^{v}_\tau = \sum_\tau^\infty \alpha^{\theta}_\tau = \infty$, $\sum_\tau^\infty (\alpha^{\eta}_\tau)^2,\quad  \sum_\tau^\infty (\alpha^{v}_\tau)^2, \quad \sum_\tau^\infty (\alpha^{\theta}_\tau)^2 < \infty$ and $\alpha^{\theta}_\tau = o (\alpha^{v}_\tau)$.
\end{assumption}

We define the induced MC for the time $t(k,n)$ with a corresponding parameter $\theta_{t(k,n)}$. For this parameter, we denote with $P_{t(k,n)}$ the transition matrix at that time and the corresponding state distribution vector $\rho_{t(k,n)}$ (both induced by the policy $\pi_{\theta_{t(k,n)}}$). Finally, we define the following diagonal matrix $S_{t(k,n)}\triangleq \textrm{diag}(\rho_{t(k,n)})$ and the reward vector $r_{t(k,n)}$ with elements $r_{t(k,n)}(s) = \sum_{a} \pi_{\theta_{t(k,n)}} (a|s) r(s,a)$. Based on these definitions we define
\begin{equation}
\label{eq:def_A_b_of_TD}
    \begin{split}
        A_\tau 
        & \triangleq     
        \sum_{k=1}^K 
        \sum_{n=1}^N \frac{\beta_k}{N}
        S_{t(k,n)} \left( P_{t(k,n)} - I \right),\quad
        b_\tau 
         \triangleq     
        \sum_{k=1}^K 
        \sum_{n=1}^N \frac{\beta_k}{N}
        S_{t(k,n)} \left( r_{t(k,n)} - \eta_{\theta, k} e \right).\\
    \end{split}
\end{equation}
where $I$ is the identity matrix and $e$ is a vector of ones. The intuition behind $A_\tau$ and $b_\tau$ is the following. For an online TD(0)-learning under a stationary policy we have a fixed point at the solution to the equation  $\Phi^\top D (P-I) \Phi v + \Phi^\top D (r-\eta)=0$ (\cite{bertsekas1996neuro}; Lemma 6.5). In our case, since we have $K$ RBs where each one with $N$ samples entered at different times, we have a superposition of all these samples. When  $\tau \rightarrow \infty$,  $\rho_{t(k,n)} \rightarrow \mu_{\theta,k}$ for all index $n$. We let $S_{\theta,k} \triangleq \textrm{diag}(\mu_{\theta,k})$ and define
\begin{equation}
\label{eq:def_A_b_of_TD_infty}
\begin{split}
    A_\theta &\triangleq \sum_{k=1}^K \beta_k S_{\theta,k} \left( P_{\theta,k} - I \right), \quad
    b_\theta \triangleq \sum_{k=1}^K \beta_k S_{\theta,k} \left( r_{\theta,k} - \eta_{\theta,k} e \right).
\end{split}
\end{equation}
For proving the convergence of the critic, we assume the policy is fixed. Thus, for each RB the induced MC is one for all the samples in this RB, so the sum over $N$ disappear for $A_\theta$ and $b_\theta$.
Now we are ready to prove the following theorems, regarding Algorithm \ref{alg:s2r}. We note that Theorems \ref{theorem:critic_conv} and \ref{theorem:actor_conv} state the critic and actor convergence.
%%%%%%%%%%%%%%%%%%%%%%%%%%%%%%%%%%%%%%%%%%%%%%%%%%%
\begin{theorem}
\label{theorem:critic_conv} (Convergence of the Critic to a fixed point)\\
Under Assumptions \ref{assumption:theta_is_compact}-\ref{ass:step_size}, for any given $\pi$ and $\{\eta_\tau\}, \{v_\tau\}$ as in the updates in Algorithm \ref{alg:s2r}, we have $\eta_\tau \rightarrow \bar{\eta}_\theta $ and $v_\tau \rightarrow v^\pi $ with probability 1, where $v^\pi$ is obtained as a unique solution to 
$\Phi^\top A_\theta \Phi v + \Phi^\top b_\theta = 0$. 
\end{theorem}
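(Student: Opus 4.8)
The plan is to cast the coupled recursions for $\eta_\tau$ and $v_\tau$ as a stochastic approximation driven by Markov noise and to invoke the ODE method. Because the theorem fixes the policy $\pi$ (the actor lives on the slower time scale, $\alpha^{\theta}_\tau = o(\alpha^{v}_\tau)$ by Assumption \ref{ass:step_size}), I treat $\theta$ as quasi-static and analyze the $(\eta,v)$ subsystem alone. Writing each per-step increment as a conditional mean plus a martingale-difference term, the averaging is governed by the process $Y_\tau$ from \eqref{eq:Y_t}, which by Lemma \ref{lemma:Alog1_is_markovian} is an irreducible, aperiodic Markov chain for all $\tau$ large enough; this is precisely the structure required by the stochastic approximation theory for controlled Markov noise (e.g. Borkar). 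The argument then reduces to identifying the mean field, exhibiting a globally asymptotically stable equilibrium of the associated ODE, and checking boundedness of the iterates.

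First, for the average-reward recursion (line \ref{line:average_reward}) I would identify the mean field. Conditioned on the buffer contents, the expectation of $\frac{1}{N_{\text{batch}}}\sum_z \tilde r^z$ is a $\beta$-weighted mixture of per-buffer reward averages, whose slots were generated under the historical policies $\pi_{\theta_{t(k,n)}}$. As $\tau\to\infty$ these averages tend to $\sum_k \beta_k \eta_{\theta,k} = \bar{\eta}_\theta$, so the limiting ODE is $\dot\eta = \bar{\eta}_\theta - \eta$, with unique globally asymptotically stable equilibrium $\bar{\eta}_\theta$ of \eqref{eq:eta_average_on_k}; hence $\eta_\tau \to \bar{\eta}_\theta$ a.s. Second, for the critic recursion (line \ref{line:critic}), substituting $\eta_\tau\to\bar{\eta}_\theta$ I would show the conditional mean of the increment equals $\Phi^\top A_\tau \Phi v_\tau + \Phi^\top b_\tau$ with $A_\tau,b_\tau$ as in \eqref{eq:def_A_b_of_TD}, the superposition reflecting that each of the $N$ slots of $RB(k)$ holds a transition drawn under a possibly different past policy. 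The limiting ODE is $\dot v = \Phi^\top A_\theta \Phi v + \Phi^\top b_\theta$, and its global asymptotic stability follows from showing $\Phi^\top A_\theta \Phi$ is Hurwitz: each summand $\Phi^\top S_{\theta,k}(P_{\theta,k}-I)\Phi$ is negative definite by the standard weighted-norm argument for average-reward TD (using that $S_{\theta,k}=\mathrm{diag}(\mu_{\theta,k})$ with $\mu_{\theta,k}$ stationary, $\Phi$ of full rank, and $\Phi v \neq e$, all from Assumption \ref{ass:independent_features}), and a $\beta$-convex combination of negative-definite matrices is negative definite. The unique equilibrium is then $v^\pi = -(\Phi^\top A_\theta \Phi)^{-1}\Phi^\top b_\theta$, i.e. the solution of $\Phi^\top A_\theta \Phi v + \Phi^\top b_\theta = 0$. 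Boundedness of the iterates, needed to apply the ODE theorem, follows from the projection $\Gamma$ and compactness of $\Theta$ (Assumption \ref{assumption:theta_is_compact}), boundedness of rewards and features (Assumptions \ref{assumption:theta_is_compact}, \ref{ass:independent_features}), and the stability of the critic ODE via a standard scaled-ODE/Lyapunov boundedness argument.

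The main obstacle is the replay-buffer non-stationarity: the batch is drawn not from the current stationary distribution but from a mixture of historical distributions $\rho_{t(k,n)}$ frozen into the $N$ buffer slots. The crux is to prove $A_\tau \to A_\theta$ and $b_\tau \to b_\theta$, i.e. that $\rho_{t(k,n)} \to \mu_{\theta,k}$ for every slot as $\tau\to\infty$. This rests on the two-time-scale separation (the policy drifts slowly, so for large $\tau$ all $N$ stored transitions were pushed while $\theta$ was essentially fixed) together with the geometric mixing guaranteed by irreducibility and aperiodicity (Assumption \ref{ass:aperiodic_k_mdps} and Lemma \ref{lemma:Alog1_is_markovian}), which drives each slot's marginal toward $\mu_{\theta,k}$. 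Making this convergence uniform over the finitely many slots, and controlling the residual Markov noise so that the averaged mean field is genuinely $\Phi^\top A_\theta \Phi v + \Phi^\top b_\theta$, is the technically delicate part; the remainder is the by-now-standard two-time-scale ODE bookkeeping.
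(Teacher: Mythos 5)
Your proposal follows essentially the same route as the paper's proof: decompose each update into a conditional mean plus a martingale-difference term, identify the mean field by averaging over the $\beta$-weighted buffers and the $N$ historical slots (yielding $\Phi^\top A_\tau \Phi v + \Phi^\top b_\tau$ and its limit $\Phi^\top A_\theta \Phi v + \Phi^\top b_\theta$, exactly the paper's Lemma~\ref{lemma:expected_delta}), establish global asymptotic stability via negative definiteness under Assumption~\ref{ass:independent_features}, and close with the Borkar--Meyn scaled-ODE boundedness and convergence theorem. The argument is correct and matches the paper's, including the identification of $\rho_{t(k,n)} \to \mu_{\theta,k}$ as the step that handles the replay-buffer non-stationarity.
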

The proof for Theorem \ref{theorem:critic_conv} follows the proof for Lemma 5 in \cite{bhatnagar2009natural}, see more details in the supplementary material \ref{appendix:critic_conv}. 
%%%%%%%%%%%%%%%%%%%%%%%%%%%%%%%%%%%%
For establishing the convergence of the actor updates, we define additional terms. Let $\mathcal{Z}$ denote the set of asymptotically stable equilibria of the ODE $\dot \theta = \hat \Gamma (- \nabla_\theta \bar{\eta}_\theta)$ and let $\mathcal{Z}^\epsilon$ be the $\epsilon$-neighborhood of $\mathcal{Z}$. Let $\bar{V}_k^{\pi_{\theta}}(\TS) = \sum_{\TA } \pi_{\theta} (\TA|\TS) \left( r(\TS,\TA) - \eta_{\theta, k} + \sum_{\TSP} P_k (\TSP|\TS, \TA) \phi (\TSP)^\top v^{\pi_\theta} \right)$, and define
\begin{equation*}
    \xi^{\pi_\theta} = \sum_{k=1}^K \beta_k \sum_{\TS} \mu_{\theta,k}  (\TS) \Big(  \phi(\TS)^\top \nabla_\theta v^{\pi_\theta} -  \nabla_\theta \bar{V}_k^{\pi_{\theta}}(\TS) \Big).
\end{equation*}
\begin{theorem}
\label{theorem:actor_conv} (Convergence of the actor)\\
Under Assumptions \ref{assumption:theta_is_compact}-\ref{ass:step_size}, given $\epsilon > 0$, $\exists \delta > 0$ such that for $\theta_\tau$, $\tau \ge 0$ obtained using Algorithm \ref{alg:s2r}, if $\sup_{\theta_\tau} \| \xi^{\pi_{\theta_\tau}}\| < \delta $, then $\theta_\tau \rightarrow \mathcal{Z}^\epsilon$ as $\tau \rightarrow \infty$ with probability one. 
\end{theorem}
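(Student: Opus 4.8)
The plan is to treat the actor recursion (line~\ref{line:actor}) as a projected two--timescale stochastic approximation and analyze it via the ODE method of Kushner--Clark / Borkar. Since by Assumption~\ref{ass:step_size} we have $\alpha^\theta_\tau = o(\alpha^v_\tau)$, the critic runs on the fast timescale and, from the slowly moving actor's viewpoint, is essentially equilibrated. Concretely, I would invoke Theorem~\ref{theorem:critic_conv} to argue that along the actor iterates one may substitute $v_\tau = v^{\pi_{\theta_\tau}} + o(1)$ and $\eta_\tau = \bar\eta_{\theta_\tau} + o(1)$, so that the driving term of the actor update can be replaced by its value at the converged critic up to an asymptotically vanishing bias. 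This reduces the problem to a single--timescale projected SA recursion for $\theta_\tau$ whose limiting behaviour is governed by an averaged ODE.

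Next I would write the recursion in the canonical form $\theta_{\tau+1} = \Gamma\big(\theta_\tau + \alpha^\theta_\tau(h(\theta_\tau) + M_{\tau+1} + b_\tau)\big)$, where the mean field $h(\theta)$ is the conditional expectation of $-\frac{1}{N_{\mathrm{batch}}}\sum_z \delta(\TO^z)\nabla_\theta\log\pi_\theta(\TA^z|\TS^z)$ taken under the stationary distribution of the RB process $Y_\tau$, $M_{\tau+1}$ is a martingale--difference noise, and $b_\tau\to 0$ collects the critic bias. Lemma~\ref{lemma:Alog1_is_markovian} guarantees that $Y_\tau$ is eventually irreducible and aperiodic, hence possesses a unique stationary law, so $h$ is well defined; moreover that stationary law selects environment $k$ with probability $\beta_k$ and state $s$ with $\mu_{\theta,k}(s)$, which is exactly what makes $h$ match the $\beta$--weighted averages appearing in the statement. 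I would then verify the standard SA hypotheses: $h$ is Lipschitz (Assumption~\ref{ass:continous_policy} gives continuity of the score, Assumption~\ref{ass:independent_features} gives Lipschitz bounded features, and compactness of $\Theta$ in Assumption~\ref{assumption:theta_is_compact} bounds them), the noise satisfies $\mathbb{E}[\|M_{\tau+1}\|^2\mid\mathcal{F}_\tau]\le C(1+\|\theta_\tau\|^2)$ using $|r|\le 1$ and bounded features/scores, and the iterates stay bounded because $\Gamma$ projects onto the compact $\Theta$. Square--summability of $\{\alpha^\theta_\tau\}$ then makes the noise term vanish a.s.

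The crux of the argument is identifying $h(\theta)$ and showing its discrepancy from the target gradient is exactly $\xi^{\pi_\theta}$. Using the converged critic, the inner conditional expectation of the TD error is $\mathbb{E}[\delta(\TO)\mid s,a] = r(s,a)-\eta_{\theta,k}+\sum_{s'}P_k(s'|s,a)\phi(s')^\top v^{\pi_\theta}-\phi(s)^\top v^{\pi_\theta}$, so that $h(\theta)= -\sum_k\beta_k\sum_s\mu_{\theta,k}(s)\sum_a\nabla_\theta\pi_\theta(a|s)\big(r(s,a)-\eta_{\theta,k}+\sum_{s'}P_k(s'|s,a)\phi(s')^\top v^{\pi_\theta}\big)$, where the baseline $\phi(s)^\top v^{\pi_\theta}$ drops out since $\sum_a\nabla_\theta\pi_\theta(a|s)=0$. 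I would then compare this with the average--reward policy gradient $\nabla_\theta\bar\eta_\theta$, in which the true differential value $V_k^{\pi_\theta}$ plays the role that $\phi^\top v^{\pi_\theta}$ plays above; after differentiating the $\theta$--dependent fixed point $v^{\pi_\theta}$ (which produces the $\phi(s)^\top\nabla_\theta v^{\pi_\theta}$ term) and regrouping through the definition of $\bar V_k^{\pi_\theta}$, the residual collapses to precisely $h(\theta)=-\nabla_\theta\bar\eta_\theta+\xi^{\pi_\theta}$, so that the projected mean ODE reads $\dot\theta=\hat\Gamma\big(-\nabla_\theta\bar\eta_\theta+\xi^{\pi_\theta}\big)$. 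This mirrors the function--approximation error term in the actor analysis of Bhatnagar et al.~\cite{bhatnagar2009natural} and is the step I expect to be the main obstacle: the bookkeeping of the gradient of the $\theta$--dependent critic limit, and the verification that the induced measure of the RB process really is the $\beta_k$--mixture of the $\mu_{\theta,k}$, must both be handled carefully.

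Finally, the limiting ODE is a bounded perturbation of $\dot\theta=\hat\Gamma(-\nabla_\theta\bar\eta_\theta)$, whose asymptotically stable set is $\mathcal{Z}$. Taking $\bar\eta_\theta$ as a Lyapunov function for the unperturbed projected flow, I would apply Hirsch's theorem on perturbations of asymptotically stable ODEs: for the given $\epsilon>0$ there is $\delta>0$ such that whenever $\sup_{\theta_\tau}\|\xi^{\pi_{\theta_\tau}}\|<\delta$ every trajectory of the perturbed ODE enters and remains in $\mathcal{Z}^\epsilon$. Combining this with the projected--SA convergence theorem (the iterates track the ODE a.s.\ under the hypotheses verified above) yields $\theta_\tau\to\mathcal{Z}^\epsilon$ with probability one, as claimed.
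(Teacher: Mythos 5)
Your proposal is correct and follows essentially the same route as the paper's proof: a two--timescale decomposition in which the critic is treated as equilibrated along the slow actor recursion, a martingale/mean-field splitting of the projected update, identification of the mean field as $-\nabla_\theta\bar\eta_\theta+\xi^{\pi_\theta}$ via exactly the computation carried out in the paper's auxiliary lemma (Lemma~\ref{lemma:expected_delta_grad_log}), and the Hirsch-type perturbation argument comparing the trajectories of $\dot\theta=\hat\Gamma(-\nabla_\theta\bar\eta_\theta+\xi^{\pi_\theta})$ with those of $\dot\theta=\hat\Gamma(-\nabla_\theta\bar\eta_\theta)$, all as in Theorem~2 of Bhatnagar et al. The only cosmetic difference is that you fold the critic bias into an explicit $b_\tau\to0$ term, whereas the paper writes it as $F_\tau^\theta-N_\tau^{\theta_\tau}=o(1)$; the substance is identical.
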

The proof for Theorem \ref{theorem:actor_conv} follows the proof for Theorem 2 in \cite{bhatnagar2009natural} and is given in the supplementary material  \ref{appendix:actor_conv}.

%%%%%%%%%%%%%%%%%%%%%%%%%%%%%%%%%%%%%%%%%%%%%%%%%%%%%%%%%%%%%%%%%
\subsection{Sim2Real Asymptotic Convergence Properties}
\label{section:sim2real_asymptotic_convergence}
%\subsubsection{Sim2real}

%We model the real system (\emph{real}) and the simulation (\emph{sim}) with MDPs $M_r$ and $M_s$, respectively, both share the same state space $\mathcal{S}$, action space $\mathcal{A}$, and reward function $r(s,a)$. The environment dynamics, though, are different, and are denoted by  $P_r(\cdot|\cdot,\cdot)$ for real and $P_s(\cdot|\cdot,\cdot)$ for sim. 

%For the scenario of Sim-to-Real in a robotic setup, we consider $K=2$: one real and one sim. The agent selects the real environment with probability $\beta_r$ and otherwise selects the simulator. Naturally, this framework can generalize to multiple simulations and  robots as required.
%\subsubsection{Convergence properties}
In this section we analyze the convergence properties of the Mixing Sim and Real algorithm we use. The main idea is that if sim and real are close in their dynamics through the MDP transition matrix many properties of their MDPs under the same policy are close as well. Moreover, we show that under the assumption of sim close to real, any process derived from both processes is close to both sim and real. 
\begin{assumption}
\label{assumption:sim2real_are_close}
(Closeness of sim and real). For all $\theta$, $s,s'\in S$, $a\in A$, we have $|P_s(s'|s,a) - P_r(s'|s,a)| \le \epsilon_{\text{s2r}}$.
\end{assumption}
The following theorem states that if Assumption \ref{assumption:sim2real_are_close} holds then the convergence points of sim, real, and the mixed process (as defined in Algorithm \ref{alg:s2r}) convergences to close points.
\begin{theorem}
\label{lemma:sim_close_to_real}
Consider a policy $\pi_\theta(a|s)$ and Assumptions \ref{assumption:theta_is_compact}, \ref{ass:aperiodic}, and \ref{assumption:sim2real_are_close}. Then, for each $s,s'\in S$, $a\in A$, and $\forall \theta \in \Theta$ we have:\\
1. The induced MC of sim and real, $MC_s$ and $MC_r$, satisfy $|P^\theta_{s}(s'|s) - P^\theta_r(s'|s) \le B_P \triangleq |A| \epsilon_{\textrm{s2r}}$.\\
2. Let $\tilde P_s^\theta \in \mathbb{R}^{(|S|-1)\times(|S|-1)}$ where its elements are identical to the first $(|S|-1)\times(|S|-1)$ elements of $P_s^\theta$. The corresponding stationary distributions satisfy $|\mu^\theta_s(s) - \mu^\theta_r(s)| \le B_\mu \triangleq B_P |S|^3\min_{\theta \in \Theta} \sqrt{S R_m^2}$, where $R_m$ is the largest eigenvalue of the matrix $\tilde P_s^\theta$.\\
3. The convergence points for the average reward and value functions under the policy for sim and real satisfy $\| \eta^\theta_s - \eta^\theta_r\| \le B_\eta \triangleq B_\mu |S|$ and  $\| \mathbf{v}^\theta_s - \mathbf{v}^\theta_r\| \le B_\mu$.
%5. Let $\lambda_s^{(2)}$ and $\lambda_r^{(2)}$ be the second largest absolute eigenvalue of $P^\theta_{s}(s'|s)$ and $P^\theta_{r}(s'|s)$, respectively. Then $|\lambda_s^{(2)}-\lambda_r^{(2)} | < B_\lambda$.
\end{theorem}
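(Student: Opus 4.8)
My plan is to treat the three parts in order, since each builds on the previous. Part~1 is immediate from the definition of the induced kernel $P^\theta_k(s'|s) = \sum_{a} \pi_\theta(a|s)\,P_k(s'|s,a)$: by the triangle inequality and Assumption~\ref{assumption:sim2real_are_close},
\begin{equation*}
|P^\theta_s(s'|s) - P^\theta_r(s'|s)| \le \sum_{a}\pi_\theta(a|s)\,|P_s(s'|s,a)-P_r(s'|s,a)| \le |A|\,\epsilon_{\mathrm{s2r}},
\end{equation*}
which is exactly $B_P$. (Using $\sum_a\pi_\theta(a|s)=1$ one even gets $\epsilon_{\mathrm{s2r}}$, but the stated looser bound suffices and is what the later parts consume.)

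The heart of the argument is Part~2, a perturbation bound for the stationary distribution. I would fix the reference state $s^*$ and partition each stochastic matrix $P^\theta_k$ by singling out the $s^*$-row and column, writing the balance equations $\mu^\theta_k = \mu^\theta_k P^\theta_k$ in block form. Eliminating the $s^*$-component yields $\tilde\mu^\theta_k{}^\top = \mu^\theta_k(s^*)\,w_k^\top (I-\tilde P^\theta_k)^{-1}$, where $\tilde P^\theta_k$ is the $(|S|-1)\times(|S|-1)$ sub-stochastic block and $w_k$ collects the return probabilities to $s^*$. Irreducibility and aperiodicity (Assumption~\ref{ass:aperiodic}) guarantee that the spectral radius of $\tilde P^\theta_k$ is $R_m<1$, so $(I-\tilde P^\theta_k)^{-1}=\sum_{j\ge0}(\tilde P^\theta_k)^j$ exists with norm controlled by $R_m$ and a power of $|S|$. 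I would then compare sim and real through the resolvent identity
\begin{equation*}
(I-\tilde P^\theta_s)^{-1}-(I-\tilde P^\theta_r)^{-1} = (I-\tilde P^\theta_s)^{-1}(\tilde P^\theta_s - \tilde P^\theta_r)(I-\tilde P^\theta_r)^{-1},
\end{equation*}
bounding $\|\tilde P^\theta_s-\tilde P^\theta_r\|$ by Part~1 (losing at most a factor $|S|$ in passing from the entrywise estimate to an operator bound) and treating the differences in $w_k$ and in the normalizer $\mu^\theta_k(s^*)$ the same way. Collecting the dimension factors and the resolvent norm produces the claimed $B_\mu = B_P\,|S|^3\min_{\theta\in\Theta}\sqrt{|S|R_m^2}$.

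Part~3 then follows. For the average reward, $\eta^\theta_k=\sum_s \mu^\theta_k(s)\,r_\theta(s)$ with $r_\theta(s)=\sum_a\pi_\theta(a|s)r(s,a)$ and $|r_\theta(s)|\le1$ by Assumption~\ref{assumption:theta_is_compact}, so
\begin{equation*}
|\eta^\theta_s-\eta^\theta_r| \le \sum_s |\mu^\theta_s(s)-\mu^\theta_r(s)|\,|r_\theta(s)| \le |S|\,B_\mu = B_\eta.
\end{equation*}
For the value functions I would subtract the two Bellman equations $\mathbf{v}^\theta_k = r_\theta-\eta^\theta_k e + P^\theta_k\mathbf{v}^\theta_k$ to obtain $(I-P^\theta_s)(\mathbf{v}^\theta_s-\mathbf{v}^\theta_r)=(P^\theta_s-P^\theta_r)\mathbf{v}^\theta_r-(\eta^\theta_s-\eta^\theta_r)e$, then pin $V^{\pi_\theta}(s^*)=0$ and invert the reduced resolvent $(I-\tilde P^\theta_s)^{-1}$ exactly as in Part~2. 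Bounding the right-hand side with Part~1, the $\eta$-bound just obtained, and a uniform bound on $\|\mathbf{v}^\theta_r\|$ (finite since rewards are bounded and the chain mixes), the same resolvent and dimension factors collapse the estimate to $\|\mathbf{v}^\theta_s-\mathbf{v}^\theta_r\|\le B_\mu$.

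The main obstacle is making the resolvent bound in Part~2 quantitative. The difficulty is that $\|(I-\tilde P^\theta_k)^{-1}\|$ for a non-symmetric sub-stochastic matrix is \emph{not} simply $1/(1-R_m)$, so one must route through the Neumann series and absorb the non-normality into explicit powers of $|S|$, while simultaneously tracking how the scalar normalizer $\mu^\theta_k(s^*)$ and the return vector $w_k$ perturb. Keeping all these dimension constants honest, and uniform over the compact set $\Theta$ (which is where the $\min_{\theta\in\Theta}$ enters), is the delicate bookkeeping; Parts~1 and~3 are essentially triangle-inequality consequences once Part~2 is in place.
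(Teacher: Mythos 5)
Your proposal is correct in outline and lands on the same bounds, but Part~2 takes a genuinely different route from the paper. The paper never expresses the stationary distributions individually; it subtracts the two balance equations directly, writing $\mu^\theta_s = \mu^\theta_r + \Delta\mu$ and $P^\theta_s = P^\theta_r + \Delta P$ in $\mu_s^{\theta\top}(P^\theta_s - I) = 0$ and cancelling $\mu_r^{\theta\top}(P^\theta_r - I) = 0$ to obtain the single linear system
\begin{equation*}
\Delta\mu^\top\bigl(\tilde P^\theta_s - I\bigr) = -\,\tilde\mu_r^{\theta\top}\,\widetilde{\Delta P},
\end{equation*}
which it inverts once on the reduced $(|S|-1)$-dimensional space, bounding $\|\widetilde{\Delta P}\|_F$ by Part~1 and the inverse via Gershgorin, then recovering the last coordinate from $\sum_s \mu(s)=1$. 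You instead represent each $\tilde\mu^\theta_k$ through the regeneration formula $\tilde\mu_k^{\theta\top} = \mu^\theta_k(s^*)\,w_k^\top(I-\tilde P^\theta_k)^{-1}$ and compare via the resolvent identity; this is valid, but it forces you to additionally track the perturbations of the return vector $w_k$ and of the scalar normalizer $\mu^\theta_k(s^*)$ (the latter itself requiring the very stationary-distribution bound you are trying to prove, so you must close that loop via the normalization constraint), whereas the paper's direct subtraction sidesteps all of that and needs only one resolvent bound. The trade-off is that your route makes the dependence on the spectral radius $R_m$ and the non-normality of $\tilde P^\theta_k$ explicit through the Neumann series, which is arguably more transparent than the paper's Gershgorin step. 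In Part~3 you are in fact more careful than the paper: the paper subtracts the two Bellman equations with a common $\eta$, silently dropping the $(\eta^\theta_s-\eta^\theta_r)e$ term, while you retain it and absorb it using the $B_\eta$ bound; Parts~1 and the $\eta$ half of Part~3 coincide with the paper's triangle-inequality arguments.
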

The proof for Theorem \ref{lemma:sim_close_to_real} is in the supplementary material  \ref{appendix:sim2real_asymptotic_convergence}. 
 Based on this Theorem, it follows immediately that any convex combination of "close" enough sim and real share the same properties as both sim and real. We defer to  supplementary material the precise statement. 

%The following Lemma provides a bound on the sample complexity of the agent when we apply algorithm \ref{alg:s2r} where the analysis is based on SOTA result from \cite{wu2020finite}. We show that this upper bound is similar to an algorithm that trains only on real, up to a factor depends on the perturbation between sim and real (proof in the Supplementary material \ref{}).

%\begin{theorem}
%Suppose the agent finite sample complexity only real is bounded by $A\cdot\frac{1}{t^{(1-\sigma)}} + B\cdot\frac{1}{t^{\sigma}}+C$. Then, training with Algorithm \ref{alg:s2r} with Assumption \ref{assumption:sim2real_are_close} is similar where $\tilde B$ replaces $B$ and $|B-\tilde B| < \epsilon_{\tilde B}$. 
%\end{theorem}

% The following corollary follows immediately from Lemma \ref{lemma:sim_close_to_real} and establishes that any convex combination of "close" enough sim and real share the same properties as both sim and real. The proof for Corollary \ref{corollary:s2r_close_to_real} is in the supplementary material  \ref{appendix:sim2real_asymptotic_convergence}.
%\begin{corollary}
%\label{corollary:s2r_close_to_real}
%Assuming the same as in Lemma \ref{lemma:sim_close_to_real}, if $Y$ is a process where its dynamics can be described as $P_Y = \beta P_s(s'|s,a) + (1-\beta) P_r(s'|s,a)$ for $0 \le \beta \le 1$, then $Y$ satisfies the same properties as of Lemma \ref{lemma:sim_close_to_real} w.r.t. the real process.
%\end{corollary}

\section{Experimental Evaluation}
\label{sec:evaluation}

%\begin{figure}[t]
%\begin{center}
%\centerline{\includegraphics[width=.5\columnwidth]{}}
%\caption{Fetch Push environment. A robot arm needs to push a box to a goal point (represented as a red ball).}
%\label{fig:fetch_task}
%\end{center}
%\end{figure}

%\newcommand{\FetchPushWidth}{0.45}
%\begin{wrapfigure}{r}{\FetchPushWidth\textwidth}
%  \begin{center}
%    \includegraphics[width=\FetchPushWidth\columnwidth]{pictures/fetch_task.png}
%  \end{center}
%  \caption{Fetch Push environment. A robot arm needs to push a box to a goal point (in red).}
%  \label{fig:fetch_task}
%\end{wrapfigure}

In this section we evaluate the performance of our proposed algorithm on two Fetch Push environments \cite{plappert2018multi}, one acts as the real environment and the other is the simulation environment \footnote{Code for the experiments can is available at: \url{https://github.com/sdicastro/SimAndRealBetterTogether}.}. Although our theoretical results are on the proposed mixing scheme with linear function approximation, in this section we focus on non-linear methodologies, i.e., using neural networks. We set $K=2$ meaning there is only one real and one simulation environments. We denote by $q_r$ the probability of collecting samples from the real environment and by $\beta_r$ the probability of choosing samples from the real environment for the optimization process. We are interested in demonstrating the effect of different $q_r$ and  $\beta_r$ values on the learning process. We investigate different mixing strategies for combining real and sim samples.
\begin{enumerate}
    \item "Mixed": real and sim episodes are collected according to Algorithm \ref{alg:s2r}.
    \item "Real only": The agent collects and optimize only real samples (i.e., $q_r=1$ and $\beta_r=1$).
    \item "Sim only": The agent collects and optimize only sim samples (i.e.,  $q_r=0$ and $\beta_r=0$).
    \item "Sim first": At the beginning the agent collects and optimize only sim samples. When the success rate in the sim  reaches 0.7, we switch to sampling and optimizing only using real. 
    \item "Sim-dependent": At the beginning the agent collects and optimize only sim samples. When the success rate in the sim environment reaches 0.7, we switch to the "Mixed" strategy.
\end{enumerate}
In the Fetch Push task,
%(Figure \ref{fig:fetch_task})
a robot arm needs to push an object on a table to a certain goal point. The state is represented by the gripper, object and target position and pose, as well as their velocities and angular velocities\footnote{The final dimension is $28$ after removing non-informative dimensions.}. The action specifies the desired gripper position at the next time-step. The agent gets a reward of -1, if the desired goal was not yet achieved and 0 if it was achieved within some tolerance. To solve the task we used our mixing sim and real algorithm and replaced the linear actor-critic optimization scheme (lines \ref{line:TD}-\ref{line:actor} in Algorithm \ref{alg:s2r}) with DDPG \cite{lillicrap2015continuous} together with Hindsight Experience Replay (HER; \cite{andrychowicz2017hindsight}) optimization scheme. We created the real and sim environments using the Mujoco simulator \cite{todorov2012mujoco}. The difference between the environments is the friction between the object and the table. We preceded the following experiments with an experiment to depict a region of friction parameters where training the task using only sim samples and using the trained policy in the real environment does not solve the task (see supplementary material Section \ref{app:task_parameter}). 

We emphasize that we evaluate the performance in each experiment according to the success rate in the \emph{real environment}, as this is the environment of final interest. In addition, we seek for mixing strategies that achieve the \emph{lowest} number of \emph{real} samples since usually they are costly and harder to get than sim samples.  

\begin{figure}[t]
\centerline{\includegraphics[width=0.9\columnwidth]{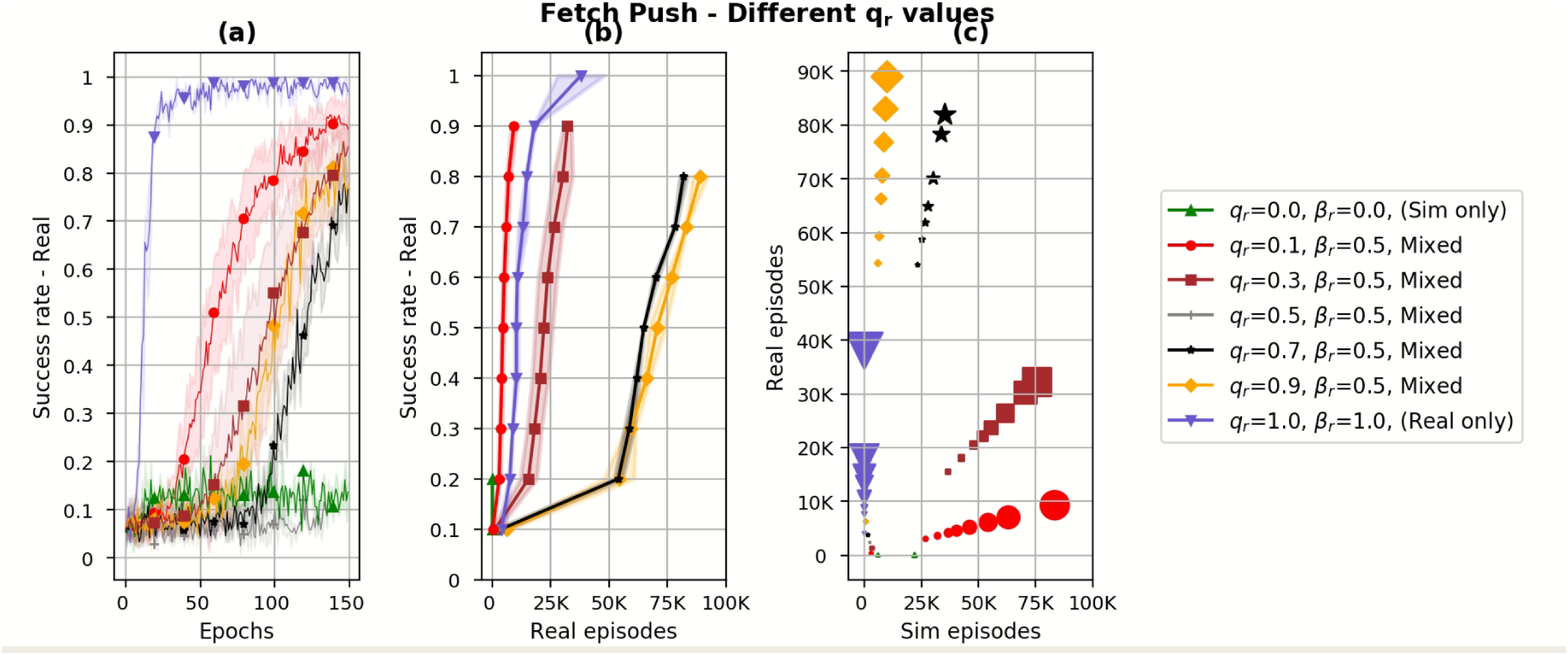}}
\caption{"Real only", "Sim only", and "Mixed" strategies with fixed $\beta_r$ and different $q_r$ values. \textbf{(a)} Success rate in the real environment vs. number of epochs. Each epoch corresponds to 100 episodes, mixed with real and sim episodes. The success rate is computed every epoch over 10 test episodes. \textbf{(b)} Success rate in the real environment vs. number of real episodes
%, i.e., how many real episodes are required to achieve success rate of $0.1, 0.2, \ldots 1$ for the first time. 
\textbf{(c)} Number of real episodes vs. number of sim episodes. The size of the markers corresponds to the increasing success rate. 
%This graph shows how many real episodes and sim episodes are required to achieve different success rates. 
% For example, $q_r=0.1$ achieves the best performance since it requires fewer real samples to achieve success rate of $0.9$, compared to other $q_r$ values. 
For all graphs, we repeated each experiment with $10$ different random seeds and present the mean and standard deviation values.}
\label{fig:beta_fixed}
\end{figure}

%\subsection{Different $q_{r}$ values}
\textbf{Different $q_{r}$ values}: 
We fix optimization parameter $\beta_r=0.5$ and test different collection parameter $q_r=0, 0.1, 0.3, 0.5, 0.7, 0.9, 1$. Results are presented in Figure \ref{fig:beta_fixed}. We notice that when the agent is trained using "Sim only" strategy ($q_r=0$), it fails to solve the task in real (Figure \ref{fig:beta_fixed}a). Next, when the agent is trained using "Real only" strategy ($q_r=1$), the task is solved. However, for achieving 0.9 success rate, "Real only" requires approximately 20K real environment episodes and to increase it to success rate of 1, it requires approximately 40K real episodes (Figures \ref{fig:beta_fixed}b and \ref{fig:beta_fixed}c). Observing the $q_r$ values in-between, we see that $q_r=0.1$ achieves the best performance -- it uses fewer ($\approx10$K) real episodes to achieve high success rates compared to the "Real only" strategy. Notice that as $q_r$ increases the performance deteriorates. This phenomenon can be explained due to the mixed samples distribution. When $q_r$ is low, most of the data distribution is based on sim, and real samples do not change it much, but only "fine tune" the learning. When $q_r$ increases, the data distribution is composed of two different environments which may confuse the agent. 

% \subsection{Different $\beta_r$ values}
\textbf{Different $\beta_r$ values}:
In this experiment, we fix $q_r=0.1$ and test for  $\beta_r={0.1, 0.15, 0.3, \ldots 0.9}$. Results are presented in Figure \ref{fig:q_fixed}. When $\beta_r$ is low and equals $q_r$, the agent fails to solve the task (Figure \ref{fig:q_fixed}a). But when $\beta_r$ is higher than $q_r$, the performance improves where no significant differences are observed for $\beta_r = 0.3, 0.5, 0.7$. For $\beta_r=0.7$, the algorithm achieves the best performance: high success rate of 0.9 while using fewer real episodes and fewer sim episodes compared to other $\beta_r$ values (Figures \ref{fig:q_fixed}b and \ref{fig:q_fixed}c). Interestingly, when  $\beta_r$ is too high (with respect to $q_r$, i.e.,  $\beta_r=0.9$) the performance deteriorates, 
%Recalling that $\beta_r$ reflects the weight we give the real samples in the optimization process, where we want to give them a higher weight, compared the the probability they are being sampled, $q_r$. 
suggesting that choosing $\beta_r > q_r$ is preferable. 

\begin{figure}[t]
\centerline{\includegraphics[width=0.9\columnwidth]{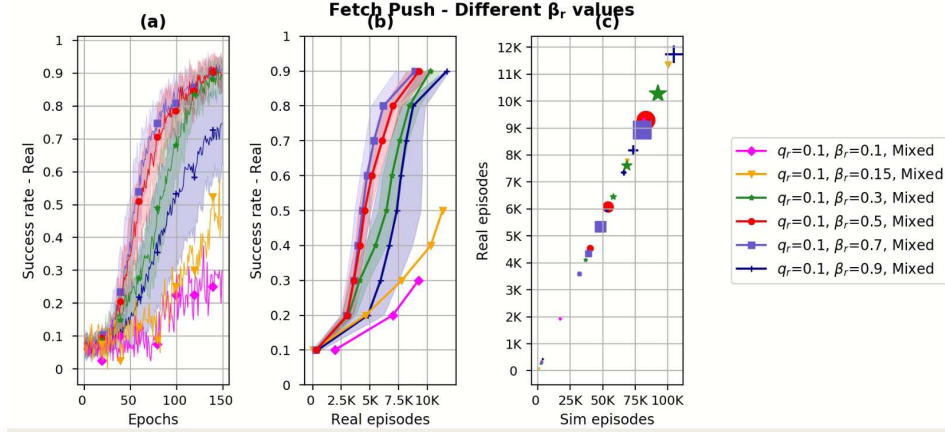}}
\caption{The "Mixed" strategy with fixed $q_r$ and different $\beta_r$ values. (a), (b) and (c) descriptions are the same as in Figure \ref{fig:beta_fixed}. In (c), the size of the markers corresponds to the increasing success rate: $0.1, 0.3, 0.5, 0.7, 0.9$.}
\label{fig:q_fixed}
\end{figure}

%\subsection{Different Mixing Strategies}
\textbf{Different Mixing Strategies:}
We tested different mixing strategies. "Mixed", "Sim first" and "Sim-dependent" as described above. Results are presented in Figure \ref{fig:diff_strategies}. Using the "Sim-dependent" strategy reduced the required real and sim episodes to achieve 0.9 success rate comparing to the "Mixed" strategy with the same $q_r$ and $\beta_r$ values (Figure \ref{fig:diff_strategies}c). When using "Sim first" strategy, we observe that although in the beginning of the learning it uses only sim samples, once it switches to use only real samples, the agent requires many more real episodes to achieve success rate of $0.9$ (compared to the "Mixed" and "Sim-dependent" strategies; Figures \ref{fig:diff_strategies}b and \ref{fig:diff_strategies}c). Although the most common approach is to train a policy in simulation and then use it as an initial starting point for the real system, we see that applying the mixing strategy after transferring the policy to real can reduce further the required real episodes while maintaining high success rate. 

\begin{figure}[t]
\centerline{\includegraphics[width=0.9\columnwidth]{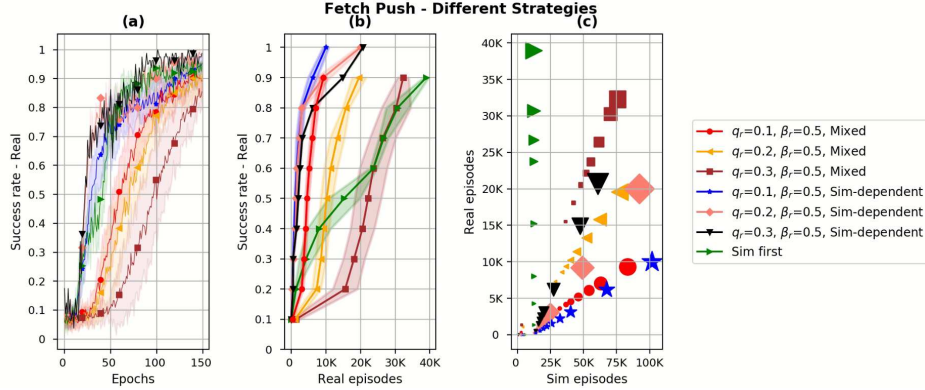}}
\caption{Comparing strategies: "Mixed", "Sim-dependent" and "Sim first". (a), (b) and (c) descriptions are the same as in Figure \ref{fig:beta_fixed}. It can be clearly seen in (c) that "Sim first" requires the most number of real episodes to achieve a high success rate. In addition, (b) and (c) demonstrate that for the same ($q_r, \beta_r$) tuple, for example ($q_r=0.2, \beta=0.5$), "Sim-dependent" strategy achieves higher success rates with less number of real episodes, compared to the "Mixing" strategy. }
\label{fig:diff_strategies}
\end{figure}

\section{Conclusions and Future Work}
\label{sec:conclusions}
In this work we analyzed a mixing strategy between simulation and real system samples. By separating the rate of collecting samples from each environment and the rate of choosing samples for the optimization process, we were able to achieve a significant reduction in the amount of real environment samples, comparing to the common strategy of using the same rate for both collection and optimization phases. This reduction is of special interest since usually the real samples are costly and harder to achieve. 
We believe this work can lead to a new line of research. First,  finite sample analysis for our proposed algorithm can reveal its exact sample complexity. Comparing it to the sample complexity of learning only on real environment can emphasis the advantage of using the mixing strategy. Second, other replay buffer prioritization schemes can now be theoretically analyzed using the dynamics and properties of replay buffers we have developed. Third, our approach is limited to the online case, where new samples are collected during training. Adapting our approach to the offline case can discover new venues in the offline RL research. Fourth, learning the real samples collection rate and adapting it during training can further improve our approach.

\bibliography{neurips2021}

\begin{thebibliography}{10}

\bibitem{andrychowicz2017hindsight}
Marcin Andrychowicz, Filip Wolski, Alex Ray, Jonas Schneider, Rachel Fong,
  Peter Welinder, Bob McGrew, Josh Tobin, OpenAI~Pieter Abbeel, and Wojciech
  Zaremba.
\newblock Hindsight experience replay.
\newblock In {\em Advances in neural information processing systems}, pages
  5048--5058, 2017.

\bibitem{andrychowicz2020learning}
OpenAI:~Marcin Andrychowicz, Bowen Baker, Maciek Chociej, Rafal Jozefowicz, Bob
  McGrew, Jakub Pachocki, Arthur Petron, Matthias Plappert, Glenn Powell, Alex
  Ray, et~al.
\newblock Learning dexterous in-hand manipulation.
\newblock {\em The International Journal of Robotics Research}, 39(1):3--20,
  2020.

\bibitem{bertsekas2005dynamic}
D.~Bertsekas.
\newblock {\em Dynamic programming and optimal control}.
\newblock Athena scientific Belmont, MA, 2005.

\bibitem{bertsekas1996neuro}
Dimitri~P Bertsekas and John~N Tsitsiklis.
\newblock {\em Neuro-dynamic programming}.
\newblock Athena Scientific, 1996.

\bibitem{bhatnagar2008incremental}
Shalabh Bhatnagar, Mohammad Ghavamzadeh, Mark Lee, and Richard~S Sutton.
\newblock Incremental natural actor-critic algorithms.
\newblock In {\em Advances in neural information processing systems}, pages
  105--112, 2008.

\bibitem{bhatnagar2004simultaneous}
Shalabh Bhatnagar and Shishir Kumar.
\newblock A simultaneous perturbation stochastic approximation-based
  actor-critic algorithm for markov decision processes.
\newblock {\em IEEE Transactions on Automatic Control}, 49(4):592--598, 2004.

\bibitem{bhatnagar2009natural}
Shalabh Bhatnagar, Richard~S Sutton, Mohammad Ghavamzadeh, and Mark Lee.
\newblock Natural actor--critic algorithms.
\newblock {\em Automatica}, 45(11):2471--2482, 2009.

\bibitem{borkar2009stochastic}
Vivek~S Borkar.
\newblock {\em Stochastic approximation: a dynamical systems viewpoint},
  volume~48.
\newblock Springer, 2009.

\bibitem{borkar2000ode}
Vivek~S Borkar and Sean~P Meyn.
\newblock The ode method for convergence of stochastic approximation and
  reinforcement learning.
\newblock {\em SIAM Journal on Control and Optimization}, 38(2):447--469, 2000.

\bibitem{bousmalis2016domain}
Konstantinos Bousmalis, George Trigeorgis, Nathan Silberman, Dilip Krishnan,
  and Dumitru Erhan.
\newblock Domain separation networks.
\newblock In {\em Advances in neural information processing systems}, pages
  343--351, 2016.

\bibitem{bucklew2013introduction}
James Bucklew.
\newblock {\em Introduction to rare event simulation}.
\newblock Springer Science \& Business Media, 2013.

\bibitem{chebotar2019closing}
Yevgen Chebotar, Ankur Handa, Viktor Makoviychuk, Miles Macklin, Jan Issac,
  Nathan Ratliff, and Dieter Fox.
\newblock Closing the sim-to-real loop: Adapting simulation randomization with
  real world experience.
\newblock In {\em 2019 International Conference on Robotics and Automation
  (ICRA)}, pages 8973--8979. IEEE, 2019.

\bibitem{dalal2018finite}
Gal Dalal, Bal{\'a}zs Sz{\"o}r{\'e}nyi, Gugan Thoppe, and Shie Mannor.
\newblock Finite sample analyses for td (0) with function approximation.
\newblock In {\em Proceedings of the AAAI Conference on Artificial
  Intelligence}, 2018.

\bibitem{dicastro2010convergent}
Dotan Di~Castro and Ron Meir.
\newblock A convergent online single time scale actor critic algorithm.
\newblock {\em The Journal of Machine Learning Research}, 11:367--410, 2010.

\bibitem{fedus2020revisiting}
William Fedus, Prajit Ramachandran, Rishabh Agarwal, Yoshua Bengio, Hugo
  Larochelle, Mark Rowland, and Will Dabney.
\newblock Revisiting fundamentals of experience replay.
\newblock In {\em International Conference on Machine Learning}, pages
  3061--3071. PMLR, 2020.

\bibitem{ganin2017domain}
Yaroslav Ganin, Evgeniya Ustinova, Hana Ajakan, Pascal Germain, Hugo
  Larochelle, Fran{\c{c}}ois Laviolette, Mario Marchand, and Victor Lempitsky.
\newblock Domain-adversarial training of neural networks.
\newblock In {\em Domain Adaptation in Computer Vision Applications}, pages
  189--209. Springer, 2017.

\bibitem{horgan2018distributed}
Dan Horgan, John Quan, David Budden, Gabriel Barth-Maron, Matteo Hessel, Hado
  Van~Hasselt, and David Silver.
\newblock Distributed prioritized experience replay.
\newblock {\em arXiv preprint arXiv:1803.00933}, 2018.

\bibitem{horn2012matrix}
Roger~A Horn and Charles~R Johnson.
\newblock {\em Matrix analysis}.
\newblock Cambridge university press, 2012.

\bibitem{jakobi1995noise}
Nick Jakobi, Phil Husbands, and Inman Harvey.
\newblock Noise and the reality gap: The use of simulation in evolutionary
  robotics.
\newblock In {\em European Conference on Artificial Life}, pages 704--720.
  Springer, 1995.

\bibitem{james2017transferring}
Stephen James, Andrew~J Davison, and Edward Johns.
\newblock Transferring end-to-end visuomotor control from simulation to real
  world for a multi-stage task.
\newblock {\em arXiv preprint arXiv:1707.02267}, 2017.

\bibitem{james2019sim}
Stephen James, Paul Wohlhart, Mrinal Kalakrishnan, Dmitry Kalashnikov, Alex
  Irpan, Julian Ibarz, Sergey Levine, Raia Hadsell, and Konstantinos Bousmalis.
\newblock Sim-to-real via sim-to-sim: Data-efficient robotic grasping via
  randomized-to-canonical adaptation networks.
\newblock In {\em Proceedings of the IEEE Conference on Computer Vision and
  Pattern Recognition}, pages 12627--12637, 2019.

\bibitem{kalashnikov2018QT_opt}
Dmitry Kalashnikov, Alex Irpan, Peter Pastor, Julian Ibarz, Alexander Herzog,
  Eric Jang, Deirdre Quillen, Ethan Holly, Mrinal Kalakrishnan, Vincent
  Vanhoucke, et~al.
\newblock Qt-opt: Scalable deep reinforcement learning for vision-based robotic
  manipulation.
\newblock {\em arXiv preprint arXiv:1806.10293}, 2018.

\bibitem{kang2019generalization}
Katie Kang, Suneel Belkhale, Gregory Kahn, Pieter Abbeel, and Sergey Levine.
\newblock Generalization through simulation: Integrating simulated and real
  data into deep reinforcement learning for vision-based autonomous flight.
\newblock In {\em 2019 International Conference on Robotics and Automation
  (ICRA)}, pages 6008--6014. IEEE, 2019.

\bibitem{kim2017learning}
Taeksoo Kim, Moonsu Cha, Hyunsoo Kim, Jung~Kwon Lee, and Jiwon Kim.
\newblock Learning to discover cross-domain relations with generative
  adversarial networks.
\newblock In {\em Proceedings of the 34th International Conference on Machine
  Learning-Volume 70}, pages 1857--1865. JMLR. org, 2017.

\bibitem{kingma2014adam}
Diederik~P Kingma and Jimmy Ba.
\newblock Adam: A method for stochastic optimization.
\newblock {\em arXiv preprint arXiv:1412.6980}, 2014.

\bibitem{konda2000actor}
Vijay~R Konda and John~N Tsitsiklis.
\newblock Actor-critic algorithms.
\newblock In {\em Advances in neural information processing systems}, pages
  1008--1014. Citeseer, 2000.

\bibitem{kushner2003stochastic}
Harold Kushner and G~George Yin.
\newblock {\em Stochastic approximation and recursive algorithms and
  applications}, volume~35.
\newblock Springer Science \& Business Media, 2003.

\bibitem{kushner2012stochastic}
Harold~Joseph Kushner and Dean~S Clark.
\newblock {\em Stochastic approximation methods for constrained and
  unconstrained systems}, volume~26.
\newblock Springer Science \& Business Media, 2012.

\bibitem{levine2018learning}
Sergey Levine, Peter Pastor, Alex Krizhevsky, Julian Ibarz, and Deirdre
  Quillen.
\newblock Learning hand-eye coordination for robotic grasping with deep
  learning and large-scale data collection.
\newblock {\em The International Journal of Robotics Research},
  37(4-5):421--436, 2018.

\bibitem{lillicrap2015continuous}
Timothy~P Lillicrap, Jonathan~J Hunt, Alexander Pritzel, Nicolas Heess, Tom
  Erez, Yuval Tassa, David Silver, and Daan Wierstra.
\newblock Continuous control with deep reinforcement learning.
\newblock {\em arXiv preprint arXiv:1509.02971}, 2015.

\bibitem{lin1993reinforcement}
Long-Ji Lin.
\newblock Reinforcement learning for robots using neural networks.
\newblock Technical report, Carnegie-Mellon Univ Pittsburgh PA School of
  Computer Science, 1993.

\bibitem{liu2018effects}
Ruishan Liu and James Zou.
\newblock The effects of memory replay in reinforcement learning.
\newblock In {\em 2018 56th Annual Allerton Conference on Communication,
  Control, and Computing (Allerton)}, pages 478--485. IEEE, 2018.

\bibitem{long2015learning}
Mingsheng Long, Yue Cao, Jianmin Wang, and Michael~I Jordan.
\newblock Learning transferable features with deep adaptation networks.
\newblock In {\em Proceedings of the 32nd International Conference on
  International Conference on Machine Learning-Volume 37}, pages 97--105. JMLR.
  org, 2015.

\bibitem{mnih2013playing}
Volodymyr Mnih, Koray Kavukcuoglu, David Silver, Alex Graves, Ioannis
  Antonoglou, Daan Wierstra, and Martin Riedmiller.
\newblock Playing atari with deep reinforcement learning.
\newblock {\em arXiv preprint arXiv:1312.5602}, 2013.

\bibitem{pan2018organizing}
Yangchen Pan, Muhammad Zaheer, Adam White, Andrew Patterson, and Martha White.
\newblock Organizing experience: a deeper look at replay mechanisms for
  sample-based planning in continuous state domains.
\newblock {\em arXiv preprint arXiv:1806.04624}, 2018.

\bibitem{peng2018sim}
Xue~Bin Peng, Marcin Andrychowicz, Wojciech Zaremba, and Pieter Abbeel.
\newblock Sim-to-real transfer of robotic control with dynamics randomization.
\newblock In {\em 2018 IEEE international conference on robotics and automation
  (ICRA)}, pages 1--8. IEEE, 2018.

\bibitem{pinto2016supersizing}
Lerrel Pinto and Abhinav Gupta.
\newblock Supersizing self-supervision: Learning to grasp from 50k tries and
  700 robot hours.
\newblock In {\em 2016 IEEE international conference on robotics and automation
  (ICRA)}, pages 3406--3413. IEEE, 2016.

\bibitem{plappert2018multi}
Matthias Plappert, Marcin Andrychowicz, Alex Ray, Bob McGrew, Bowen Baker,
  Glenn Powell, Jonas Schneider, Josh Tobin, Maciek Chociej, Peter Welinder,
  et~al.
\newblock Multi-goal reinforcement learning: Challenging robotics environments
  and request for research.
\newblock {\em arXiv preprint arXiv:1802.09464}, 2018.

\bibitem{puterman1994markov}
Martin~L Puterman.
\newblock {\em Markov Decision Processes}.
\newblock Wiley and Sons, 1994.

\bibitem{ramos2019bayessim}
Fabio Ramos, Rafael~Carvalhaes Possas, and Dieter Fox.
\newblock Bayessim: adaptive domain randomization via probabilistic inference
  for robotics simulators.
\newblock {\em arXiv preprint arXiv:1906.01728}, 2019.

\bibitem{sadeghi2016cad2rl}
Fereshteh Sadeghi and Sergey Levine.
\newblock Cad2rl: Real single-image flight without a single real image.
\newblock {\em arXiv preprint arXiv:1611.04201}, 2016.

\bibitem{schaul2015prioritized}
Tom Schaul, John Quan, Ioannis Antonoglou, and David Silver.
\newblock Prioritized experience replay.
\newblock {\em arXiv preprint arXiv:1511.05952}, 2015.

\bibitem{shrivastava2017learning}
Ashish Shrivastava, Tomas Pfister, Oncel Tuzel, Joshua Susskind, Wenda Wang,
  and Russell Webb.
\newblock Learning from simulated and unsupervised images through adversarial
  training.
\newblock In {\em Proceedings of the IEEE conference on computer vision and
  pattern recognition}, pages 2107--2116, 2017.

\bibitem{shu2018dirt}
Rui Shu, Hung~H Bui, Hirokazu Narui, and Stefano Ermon.
\newblock A dirt-t approach to unsupervised domain adaptation.
\newblock {\em arXiv preprint arXiv:1802.08735}, 2018.

\bibitem{sutton2018reinforcement}
R.~S. Sutton and A.~G. Barto.
\newblock {\em Reinforcement learning: An introduction}.
\newblock MIT press, 2018.

\bibitem{tobin2017domain}
Josh Tobin, Rachel Fong, Alex Ray, Jonas Schneider, Wojciech Zaremba, and
  Pieter Abbeel.
\newblock Domain randomization for transferring deep neural networks from
  simulation to the real world.
\newblock In {\em 2017 IEEE/RSJ International Conference on Intelligent Robots
  and Systems (IROS)}, pages 23--30. IEEE, 2017.

\bibitem{todorov2012mujoco}
Emanuel Todorov, Tom Erez, and Yuval Tassa.
\newblock Mujoco: A physics engine for model-based control.
\newblock In {\em 2012 IEEE/RSJ International Conference on Intelligent Robots
  and Systems}, pages 5026--5033. IEEE, 2012.

\bibitem{vuong2019pick}
Quan Vuong, Sharad Vikram, Hao Su, Sicun Gao, and Henrik~I Christensen.
\newblock How to pick the domain randomization parameters for sim-to-real
  transfer of reinforcement learning policies?
\newblock {\em arXiv preprint arXiv:1903.11774}, 2019.

\bibitem{wu2020finite}
Yue Wu, Weitong Zhang, Pan Xu, and Quanquan Gu.
\newblock A finite time analysis of two time-scale actor critic methods.
\newblock {\em arXiv preprint arXiv:2005.01350}, 2020.

\bibitem{yahya2017collective}
Ali Yahya, Adrian Li, Mrinal Kalakrishnan, Yevgen Chebotar, and Sergey Levine.
\newblock Collective robot reinforcement learning with distributed asynchronous
  guided policy search.
\newblock In {\em 2017 IEEE/RSJ International Conference on Intelligent Robots
  and Systems (IROS)}, pages 79--86. IEEE, 2017.

\bibitem{zha2019experience}
Daochen Zha, Kwei-Herng Lai, Kaixiong Zhou, and Xia Hu.
\newblock Experience replay optimization.
\newblock {\em arXiv preprint arXiv:1906.08387}, 2019.

\bibitem{zhang2017deeper}
Shangtong Zhang and Richard~S Sutton.
\newblock A deeper look at experience replay.
\newblock {\em arXiv preprint arXiv:1712.01275}, 2017.

\bibitem{zou2019finite}
Shaofeng Zou, Tengyu Xu, and Yingbin Liang.
\newblock Finite-sample analysis for sarsa with linear function approximation.
\newblock {\em arXiv preprint arXiv:1902.02234}, 2019.

\end{thebibliography}
\bibliographystyle{plain}

\clearpage

%%%%%%%%%%%%%%%%%%%%%%%%%%%%%%%%%%%%%%%%
\appendix

\onecolumn
\section{Proof of Main Lemmas and Theorems of Section \ref{subsec:Algorithm_1_asymptotic_convergence}}

%%%%%%%%%%%%%%%%%%%%%%%%%%%%%%%%%%%%%%%%%%%%%%%%%%%%%%%%%%%%%%%%%%%%%%%%%%%
\subsection{Proof of Lemma \ref{lemma:Alog1_is_markovian}}

\label{appendix:proof_Y_is_markovian}

\begin{proof}
\textbf{1.} Proving Markovity requires that
\begin{equation}
\label{eq:Y_definition_of_markov}
    P(Y_{\tau+1}|Y_{\tau}, Y_{\tau-1}, \ldots, Y_0) = P(Y_{\tau+1}|Y_{\tau}).
\end{equation}
Let us denote $\OO_{n_1}^{n_2} \triangleq \{\OO_{n_1}, \ldots, \OO_{n_2}\}$, $I_{n_1}^{n_2} \triangleq \{I_{n_1}, \ldots, I_{n_2}\}$ and $J_{n_1}^{n_2} \triangleq \{J_{n_1}, \ldots, J_{n_2}\}$. Recall that $\OO_{\tau} = \{s_\tau, a_\tau, r_\tau, s_{\tau+1}\}$ and that the time index of entering a transition into RB(k) is $t(k,n) \in \{0, \ldots, \tau-1\}$ for all $k \in \{1, \ldots, K \}$ and for all $n \in \{ 1, \ldots \}$. Index $n$ relates the position in  RB(k) in which the transition is placed at time $\tau$. In addition, recall that $\mathcal{O}_{k,n} = \{ s_{k,n}, a_{k,n}, r_{k,n}, s'_{k,n}\}$ where $s'_{k,n} \sim P_k(\cdot | s_{k,n})$. 
Let $RB_\tau(k)$ be RB(k) of MDP $M_k$ at time $\tau$, denoted as $RB_\tau(k) \triangleq \{\mathcal{O}_{k,1}, \ldots, \mathcal{O}_{k,N} \}_\tau \triangleq \OO_{k,1}^{k,N} (\tau)$ .We denote the collection of all $RB_\tau(k)$ as $\bigcup\limits_{k=1}^{K} RB_\tau(k) \triangleq \bigcup\limits_{k=1}^{K} \OO_{k,1}^{k,N}(\tau)$. 
\begin{remark}
\label{remark:time_in_RB}
Note that each time step that a transition enters some $\text{RB}(\cdot)$ is unique. That is, for a fixed $\tau$, $t(k,n) \neq t(k',n)$ for all $n$ and for $k \neq k'$. Moreover, $t(k,n) < t(k,n+1)$ for all $k$ and all $n$. In addition, note that when a new transition is pushed into the RB, the oldest transition in the RB is thrown away, and all the transitions in the RB, move one index  forward, that is $\OO_{k,n+1}(\tau+1) = \OO_{k,n}(\tau)$ for $n = 1, \ldots N-1$ and $\OO_{k,1}(\tau+1) = \OO_\tau$.
\end{remark}
Computing the l.h.s. of \eqref{eq:Y_definition_of_markov} yields
\begin{equation*}
    \begin{split}
        P(Y_{\tau+1}|Y_{\tau}, Y_{\tau-1}, \ldots, Y_0) 
        &\stackrel{1}{=} 
        P\left( \bigcup\limits_{k=1}^{K} RB_{\tau+1}(k), I_{\tau+1}, J_{\tau+1} \middle| \bigcup\limits_{k=1}^{K} RB_{\tau}(k), I_\tau, J_\tau \ldots, \bigcup\limits_{k=1}^{K} RB_{0}(k), I_0, J_0 \right)\\
        &\stackrel{2}{=}  
        P \left(\bigcup\limits_{k=1}^{K} \OO_{k,1}^{k,N}(\tau+1), I_{\tau+1}, J_{\tau+1} \middle| \bigcup\limits_{k=1}^{K} \OO_{k,1}^{k,N}(\tau), I_{\tau}, J_{\tau} \ldots, \bigcup\limits_{k=1}^{K} \OO_{k,1}^{k,N}(0), I_{0}, J_{0} \right)\\
        &\stackrel{3}{=} 
        P \left(\bigcup\limits_{k=1}^{K} \OO_{k,1}^{k,N}(\tau+1), I_{\tau+1}, J_{\tau+1} \middle| I_{0}^{\tau}, J_{0}^{\tau}, \bigcup\limits_{k=1}^{K} \OO_{k,1}^{ k,N}(\tau), \ldots, \bigcup\limits_{k=1}^{K} \OO_{k,1}^{ k,N}(0) \right)\\
        &\stackrel{4}{=} 
        P \left( \bigcup\limits_{k=1}^{K} \OO_{k,1}^{k,N}(\tau+1) \middle| I_{0}^{\tau+1}, J_{0}^{\tau+1},  \bigcup\limits_{k=1}^{K} \OO_{k,1}^{k,N}(\tau), \ldots, \bigcup\limits_{k=1}^{K} \OO_{k,1}^{k,N}(0) \right) \\
        & \times P \left( I_{\tau+1} \middle| I_{0}^{\tau}, J_{0}^{\tau+1},  \bigcup\limits_{k=1}^{K} \OO_{k,1}^{k,N}(\tau), \ldots, \bigcup\limits_{k=1}^{K} \OO_{k,1}^{k,N}(0) \right) \\
        & \times  P \left( J_{\tau+1} \middle| I_{0}^{\tau}, J_{0}^{\tau}, \bigcup\limits_{k=1}^{K} \OO_{k,1}^{k,N}(\tau), \ldots, \bigcup\limits_{k=1}^{K} \OO_{k,1}^{k,N}(0) \right) \\
        &\stackrel{5}{=} 
        P \left( \bigcup\limits_{k=1}^{K} \OO_{k,1}^{k,N}(\tau+1) \middle| \bigcup\limits_{k=1}^{K} \OO_{k,1}^{k,N}(\tau), I_{\tau}, J_\tau \right) \times P(I_{\tau+1}) \times  P(J_{\tau+1}),
    \end{split}
\end{equation*}        
where in equality (1) we use the definition, in equality (2) we wrote the RB samples explicitly, in equality (3) the terms were rearranged, in equality (4) we expressed the probability as a conditional product, and in equality (5) we use the fact that $I_{\tau}$ and $J_{\tau}$ are independent random variables and the rule of pushing transition $\OO_\tau$ into RB($I_\tau$):
\[ \OO_{t(k,1)}^{t(k,N)}(\tau+1) = \begin{cases}
 \OO_{t(k,1)}^{t(k,N)}(\tau) & \text{if } k \neq I_\tau\\
\OO_{t(k,1)}^{t(k,N-1)}(\tau) \cup \mathcal{O}_\tau   & \text{if } k = I_\tau\\
\end{cases}\]

%$s_{\tau+1} \in \OO_{t_N^{I_\tau}} \in  \OO_{t_1^{I_\tau}}^{ t_N^{I_\tau}}(\tau+1) \in  \bigcup\limits_{k=1}^{K} \OO_{t_1^k, k}^{ t_N^k, k}(\tau+1)$

Similarly, computing the r.h.s of \eqref{eq:Y_definition_of_markov} yields
\begin{equation*}
    \begin{split}
        P(Y_{\tau+1}|Y_{\tau}) 
        &= 
        P \left( \bigcup\limits_{k=1}^{K} RB_{\tau+1}(k), I_{\tau+1}, J_{\tau+1} \middle| \bigcup\limits_{k=1}^{K} RB_{\tau}(k), I_{\tau}, J_{\tau} \right)\\
        &= 
        P \left( \bigcup\limits_{k=1}^{K} RB_{\tau+1}(k) \middle|  \bigcup\limits_{k=1}^{K} RB_{\tau}(k), I_{\tau+1}, J_{\tau+1}, I_{\tau}, J_{\tau} \right) \\
        & \times P \left( I_{\tau+1} \middle| \bigcup\limits_{k=1}^{K} RB_{\tau}(k), J_{\tau+1}, I_{\tau}, J_{\tau} \right) \times P \left( J_{\tau+1} \middle| \bigcup\limits_{k=1}^{K} RB_{\tau}(k), I_{\tau}, J{\tau} \right)\\
        &= 
        P \left( \bigcup\limits_{k=1}^{K} RB_{\tau+1}(k) \middle| \bigcup\limits_{k=1}^{K} RB_{\tau}(k), I_{\tau}, J_\tau \right) \times P(I_{\tau+1}) \times P(J_{\tau+1})\\
        &= 
        P \left( \bigcup\limits_{k=1}^{K} \OO_{k,1}^{k,N}(\tau+1) \middle| \bigcup\limits_{k=1}^{K} \OO_{k,1}^{k,N}(\tau), I_{\tau}, J_\tau \right) \times P(I_{\tau+1}) \times  P(J_{\tau+1}).
    \end{split}
\end{equation*}

Both sides of \eqref{eq:Y_definition_of_markov} are equal and therefore $Y_\tau$ is Markovian.

\textbf{2.} According to Assumption \ref{ass:aperiodic_k_mdps}, we assume that for every environment $k$ and for every policy $\pi$ the Markov Process induced by the MDP together with the policy $\pi$ is irreducible and aperiodic. In addition, we assume  $\tau \ge \tau'$, where $\tau'$ is the time where we have full $K$ RBs, each one with $N$ transitions. This means that when a new transition arrives to RB(k), it requires throwing away the oldest transition in the buffer. 
We saw in part \textbf{1} that
\begin{equation}
\label{eq:Y_prob1}
    P(Y_{\tau+1}|Y_{\tau}) = P \left( \bigcup\limits_{k=1}^{K} \OO_{k,1}^{k,N}(\tau+1) \middle| \bigcup\limits_{k=1}^{K} \OO_{k,1}^{k,N}(\tau), I_{\tau}, J_\tau \right) \times P(I_{\tau+1}) \times  P(J_{\tau+1}).
\end{equation}
Let $\mathbb{K} = \{1, \ldots K\}$ be an index set. We now write explicitly the following term 
\begin{equation}
\label{eq:Y_prob2}
    \begin{split}
        & P \left( \bigcup\limits_{k=1}^{K} \OO_{k,1}^{k,N}(\tau+1) \middle| \bigcup\limits_{k=1}^{K} \OO_{k,1}^{k,N}(\tau), I_{\tau}, J_\tau \right) \\
        & = \underbrace{P \left(  \OO_{I_\tau,1}^{I_\tau,N}(\tau+1) \middle| \bigcup\limits_{k \in \mathbb{K} \setminus I_\tau} \OO_{k,1}^{k,N}(\tau+1), \bigcup\limits_{k=1}^{K} \OO_{k,1}^{k,N}(\tau), I_\tau, J_\tau \right)}_{(a)} \times \underbrace{P \left( \bigcup\limits_{k \in \mathbb{K} \setminus I_\tau} \OO_{k,1}^{k,N}(\tau+1) \middle| \bigcup\limits_{k=1}^{K} \OO_{k,1}^{k,N}(\tau), I_\tau, J_\tau \right),}_{(b)}
    \end{split}
\end{equation}
where we expressed the probability as a conditional product, separating RB($I_\tau$) at time $\tau+1$ from all other RB's. Note that in $(b)$: $\OO_{k,1}^{k,N}(\tau+1) = \OO_{k,1}^{k,N}(\tau)$ for all $k \neq I_\tau$ since these RB's do not change in this time-step. 

We continue with expression (a). 
\begin{equation}
\label{eq:Y_prob3}
    \begin{split}
    & P \left(  \OO_{I_\tau,1}^{I_\tau,N}(\tau+1) \middle| \bigcup\limits_{k \in \mathbb{K} \setminus I_\tau} \OO_{k,1}^{k,N}(\tau+1), \bigcup\limits_{k=1}^{K} \OO_{k,1}^{k,N}(\tau), I_\tau, J_\tau \right)\\
        & \stackrel{1}{=} P \left(  \OO_{I_\tau,1}^{I_\tau,N}(\tau+1) \middle| \OO_{I_\tau,1}^{I_\tau,N}(\tau), I_\tau, J_\tau \right) \\
        & \stackrel{2}{=} P \left(\OO_{I_\tau,1}(\tau+1) \middle| \OO_{I_\tau,1}(\tau), I_\tau, J_\tau \right) \\
        %&\stackrel{5}{=} P \left(\OO_{\tau} \middle| \OO_{\tau-1}, I_\tau, J_\tau \right) \\
        & \stackrel{3, (i=I_\tau, t_{i} = t_{I_\tau})}{=} P \left(s_{i, t_i}, a_{i, t_i}, r_{i, t_i}, s_{i, t_i+1}  \middle| s_{i, t_i-1}, a_{i, t_i-1}, r_{i, t_i-1}, s_{i, t_i}, I_\tau, J_\tau \right) \\
        & \stackrel{4}{=} P \left(a_{i, t_i}, r_{i, t_i}, s_{i, t_i+1}  \middle|s_{i, t_i}, I_\tau, J_\tau \right) \\
        & \stackrel{5}{=} P \left(r_{i, t_i}, s_{i, t_i+1} | s_{i, t_i}, a_{i, t_i}, I_\tau, J_\tau) \times  P(a_{i, t_i}| s_{i, t_i}, I_\tau, J_\tau \right)\\
         & \stackrel{6}{=}
        P(s_{i, t_i+1}  | s_{i, t_i} , a_{i, t_i}, r_{i, t_i},  I_\tau, J_\tau) \times P(r_{i, t_i} | s_{i, t_i}, a_{i, t_i}) \times \pi_{\theta(J_\tau)}(a_{i, t_i}| s_{i, t_i})\\
        & \stackrel{7}{=}
        P^{\pi_{\theta(J_\tau)}}_{I_\tau}(s_{i, t_i+1} | s_{i, t_i}) \times P(r_{i, t_i} | s_{i, t_i}, a_{i, t_i})
        %& \stackrel{6}{=} P \left(s_\tau, a_\tau, r_\tau, s_{\tau+1}  \middle| s_{\tau-1}, a_{\tau-1}, r_{\tau-1}, s_{\tau}, I_\tau, J_\tau \right) \\
        %& \stackrel{7}{=} P \left(a_\tau, r_\tau, s_{\tau+1}  \middle| s_{\tau}, I_\tau, J_\tau  \right) \\
        %& \stackrel{8}{=} P \left(r_\tau, s_{\tau+1} | s_{\tau}, a_\tau, , I_\tau) \times  P(a_\tau| s_{\tau}, I_\tau, J_\tau \right)\\
        %& \stackrel{9}{=} P(s_{\tau+1} | s_{\tau}, a_\tau, I_\tau) \times P(r_{\tau} | s_{\tau}, a_\tau) \times \pi_{\theta(J_\tau)}(a_\tau| s_{\tau})\\
        %& \stackrel{10}{=} P^{\pi_{\theta(J_\tau)}}_{I_\tau}(s_{\tau+1} | s_{\tau}) \times P(r_{\tau} | s_{\tau}, a_\tau)
    \end{split}
\end{equation}

where in equality (1) we omitted all $RB_{\tau+1}(k)$ and $RB_{\tau}(k)$ for $k \neq I_\tau$ since they do not influence $RB_{\tau+1}(I_\tau)$ and for time $\tau$ we left only $RB_{\tau}(I_\tau)$. For  equality (2) we recall Remark \ref{remark:time_in_RB}: $\OO_{k,n+1}(\tau+1) = \OO_{k,n}(\tau)$ for $n = 1, \ldots N-1$ and $\OO_{k,1}(\tau+1) = \OO_\tau$. Therefore, the transitions which are equal in both sides of the probability in equality (1), can be omitted. In equality (3) we write the transitions explicitly and change the notation for easier readability, $i=I_\tau$ and $t_i = t_{I_\tau}$ where $t_i$ is the last time-step in MDP $M_i$. In equality (4) we omit the condition of $s_{i, t_i}$ on itself and we keep the conditions only on $I_\tau$, $J_\tau$ and state $s_{i,t_i}$ since the process $\{s_{i, t_i}\}_{t_i\ge 0}$ is Markovian. In equalities (5) and (6) we express the probability as a conditional product. We denoted the policy $\pi_{\theta(J_\tau)}$ to emphasis that the policy depends on the observation sampled from RB($J_\tau$). Observe that in our setup, both $P(r_{i, t_i} | s_{i, t_i}, a_{i, t_i} )$ and $\pi_{\theta(J_\tau)}(a_{i, t_i}|s_{i, t_i})$ are independent of $I_\tau$. Finally, in equality (7) we use the transitions in the induced Markov processes  $\{s_{i, t_i}\}_{t_i\ge 0}$,  $P^{\pi_\theta}_{I_\tau}(s_{i, t_i+1} | s_{i, t_i}) = P_{I_\tau}(s_{i, t_i+1} | s_{i, t_i}, a_{i, t_i}) \times  \pi_\theta(a_{i, t_i}| s_{i, t_i})$ (recall that we used $i=I_\tau$). 

Recall that $I_{\tau}$ and $J_{\tau}$ are independent random variables and that $P(I_{\tau+1}=k) = q_k$ and $P(J_{\tau+1}=k) = \beta_{k}$. Combining \eqref{eq:Y_prob1}, \eqref{eq:Y_prob2} and \eqref{eq:Y_prob3} yields 
\[P(Y_{\tau+1}|Y_{\tau}) = P^{\pi_{\theta(J_\tau)}}_{I_\tau} (s' | s) \times P(r | s, a) \times q_{I_{\tau+1}} \times \beta_{J_{\tau+1}}.  \]
Using Assumption \ref{ass:aperiodic_k_mdps} and since the probability $P(r | s, a)$ does not influence the policy and MDP dynamics, the process $Y_\tau$ is aperiodic and irreducible.

%\sd{Sperate definition for general MP $\{s_\tau\}$ and $\{s_{k, \tau}\}$ for all $k$ enviroenmnts.}

%We show that the induced transition $P(Y_{\tau+1}|Y_{\tau})$ has the same probability measure as $P(s_{\tau+1}|s_\tau)$. 
%\begin{equation*}
%    \begin{split}
%        P(\OO_{t_{N+1}^{I_\tau}}| \OO_{t_{1}^{I_\tau}}^{t_{N}^{I_\tau}}(\tau))
%        & = P(\OO_{\tau}| \OO_{t_{1}^{I_\tau}}^{t_{N}^{I_\tau}}(\tau)) \\
%        &=
%        P(s_\tau, a_\tau, r_\tau, s_{\tau+1} | s_{\tau-N}^{\tau}, a_{\tau-N}^{\tau-1},r_{\tau-N}^{\tau-1}) \\
%        &=
%        P(a_\tau, r_\tau, s_{\tau+1} | s_{\tau-N}^{\tau}, a_{\tau-N}^{\tau-1},r_{\tau-N}^{\tau-1}) \times P(s_{\tau} | s_{\tau-N}^{\tau}, a_{\tau-N}^{\tau-1},r_{\tau-N}^{\tau-1}) \\
%        &=
%        P(a_\tau, r_\tau, s_{\tau+1} | s_{\tau}) \\
%        &=
%        P(r_\tau, s_{\tau+1} | s_{\tau}, a_\tau) \times  P(a_\tau| s_{\tau})\\    
%        &=
%        P(s_{\tau+1} | s_{\tau}, a_\tau) \times P(r_{\tau} | s_{\tau}, a_\tau) \times \pi_\theta(a_\tau| s_{\tau})\\            
%        \end{split}
%\end{equation*}
%Since the probability $P(r_{\tau} | s_{\tau}, a_\tau)$ does not influence the policy and MDP dynamics, the process $Y(\cdot)$ is aperiodic and irreducible.
\end{proof}

%%%%%%%%%%%%%%%%%%%%%%%%%%%%%%%%%%%%%%%%%%%%%%%%%%%%%%%%%%%%%%%%%%%%%%%%%%%
\subsection{Proof of Theorem \ref{theorem:critic_conv}}
\label{appendix:critic_conv}

\begin{proof}
Recall that our TD-error update in line \ref{line:TD} in Algorithm \ref{alg:s2r} is defined as $\delta(\TO^z) = \tilde{r}^z - \eta + \phi(\TSP^z)^\top v -  \phi(\TS^z)^\top v$,
where $\TO^z = \{\TS^z, \TA^z, \tilde {r}^z, \TSP^{z}\}$. In the critic update in line \ref{line:critic} in Algorithm \ref{alg:s2r} we use an empirical mean over several sampled observations, denoted as $\{\TO^z\}_{z=1}^{N_{\text{samples}}}$. Then, the critic update is defined as
\[v' = v + \alpha^v \frac{1}{N_{\text{samples}}} \sum_{z}\delta(\TO^z) \phi (\TS^z).\]
Consider $N_{\text{samples}} = 1$. For a single sample update, we will use the following notations for the rest of the proof: $\TO = \{\TS, \TA, \tilde {r}, \TSP\}$ and $\delta(\TO) = \tilde{r} - \eta + \phi(\TSP)^\top v -  \phi(\TS)^\top v$.

In this proof we follow the proof of Lemma 5 in \cite{bhatnagar2008incremental}. Observe that the average reward and critic updates from Algorithm \ref{alg:s2r} can be written as 
\begin{align}
\label{eq:eta_iteration}
\eta_{\tau+1} &= \eta_\tau + \alpha_\tau^{\eta} \left( F_\tau^{\eta} + M_{\tau+1}^\eta \right)\\
\label{eq:v_iteration}
    v_{\tau+1} &= v_\tau + \alpha_\tau^{v} \left( F_\tau^{v} + M_{\tau+1}^v \right),
\end{align}
where
\begin{equation*}
\label{eq:critic_Martingale_F}
\begin{split}
    F_\tau^{\eta} & \triangleq \mathbb{E}_{k \sim \beta, \TO \sim\text{RB}(k), \TS, \TA \in \TO} \left[\tilde r - \eta \middle| \mathcal{F}_\tau \right] \\
    M_{\tau+1}^\eta & \triangleq (\tilde r - \eta_\tau) - F_\tau^{\eta} \\
    F_\tau^{v} & \triangleq \mathbb{E}_{k \sim \beta, \TO \sim\text{RB}(k), \TS,\TA, \TSP \in \TO} \left[\delta(\TO) \phi(\TS) \middle| \mathcal{F}_\tau \right] \\
    M_{\tau+1}^{v} & \triangleq \delta(\TO) \phi(\TS) - F_\tau^{v}
\end{split}
\end{equation*}
and $\mathcal{F}_\tau$ is a $\sigma$-algebra defined as
$\mathcal{F}_\tau \triangleq \{\eta_t, v_t, M_{t}^\eta, M_{t}^v: t \le \tau\}$. 

We use Theorem 2.2 of \cite{borkar2000ode} to prove convergence of these iterates. Briefly, this theorem
states that given an iteration as in \eqref{eq:eta_iteration} and  \eqref{eq:v_iteration}, these iterations are bounded w.p.1 if

\begin{assumption}
\label{ass:borkar_ass_v}
\begin{enumerate}
\item $F_\tau^{\eta}$ and $F_\tau^{v}$  are Lipschitz, the functions $F_\infty(\eta) = \lim_{\sigma \rightarrow \infty} F^{\eta} (\sigma \eta)/\sigma$ and $F_\infty (v) = \lim_{\sigma \rightarrow \infty} F^{v} (\sigma v )/\sigma$ are Lipschitz, and $F_\infty(\eta)$ and $F_\infty(v)$ are asymptotically stable in the origin.\\
\item The sequences $M_{\tau+1}^{\eta}$ and $M_{\tau+1}^{v}$  are  martingale difference noises and for some $C_0^\eta$, $C_0^v$
\[\mathbb{E}\left[ (M_{\tau+1}^{\eta})^2 \middle| \mathcal{F}_\tau \right] \le C_0^\eta (1 + \| \eta_\tau \|^2)\]
\[\mathbb{E}\left[ (M_{\tau+1}^{v})^2 \middle| \mathcal{F}_\tau \right] \le C_0^v (1 + \| v_\tau \|^2).\]
\end{enumerate}
\end{assumption}
We begin with  the average reward update in \eqref{eq:eta_iteration}. The ODE describing its asymptotic behavior corresponds to 
\begin{equation}
\label{eq:eta_dot}
    \dot \eta = \mathbb{E}_{k \sim \beta, \TO \sim\text{RB}(k), \TS, \TA \in \TO} \left[\tilde r - \eta \right] \triangleq F^\eta.
\end{equation}

%\begin{equation}
%\label{eq:eta_dot}
%    \dot \eta = - \eta + \mathbb{E}[r(s, a)] = F^\eta
%\end{equation}
$F^\eta$ is Lipschitz continuous in $\eta$. The function $F_\infty (\eta)$ exists and satisfies $F_\infty (\eta) = -\eta$. The origin is an asymptotically stable equilibrium for the ODE $\dot \eta = F_\infty (\eta)$ and the related Lyapunov function is given by $\eta^2/2$.

For the critic update, consider the ODE 
\[\dot{v} = \mathbb{E}_{k \sim \beta, \TO \sim\text{RB}(k), \TS, \TA, \TSP \in \TO} \left[\delta(\TO) \phi(\TS) \right] \triangleq F^v\]
In Lemma \ref{lemma:expected_delta} we show that this ODE can be written as
\begin{equation}
\label{eq:ode_v2}
 \dot{v}
    = \Phi^\top A_\theta \Phi v + \Phi^\top b_\theta,
\end{equation}
where $A_\theta$ and $b_\theta$ are defined in \eqref{eq:def_A_b_of_TD_infty}. $F^v$ is Lipschitz continuous in $v$ and $F_\infty (v)$ exists and satisfies $F_\infty (v) = \Phi^\top A_\theta \Phi v$. 
Consider the system
\begin{equation}
\label{eq:ode_v_infty2}
    \dot{v} =F_\infty (v)
\end{equation}
In assumption \ref{ass:independent_features} we assume that $\Phi v \neq e$ for every $v \in \mathbb{R}^d$. Therefore, the only asymptotically stable equilibrium for \eqref{eq:ode_v_infty2} is the origin (see the explanation in the proof of Lemma 5 in \cite{bhatnagar2008incremental}). Therefore, for all $\tau \ge 0$
\[\mathbb{E}\left[ (M_{\tau+1}^{\eta})^2 \middle| \mathcal{F}_\tau \right] \le C_0^\eta (1 + \| \eta_\tau \|^2 + \| v_\tau \|^2)\]
\[\mathbb{E}\left[ (M_{\tau+1}^{v})^2 \middle| \mathcal{F}_\tau \right] \le C_0^v (1 + \| \eta_\tau \|^2 + \| v_\tau \|^2)\]
for some $C_0^\eta, C_0^v < \infty$. $M_{\tau}^{\eta}$ can be directly seen to be uniformly bounded almost surely. Thus, Assumptions (A1) and (A2) of \cite{borkar2000ode} are satisfied for the average reward, TD-error, and critic updates. From Theorem 2.1 of \cite{borkar2000ode},  the average reward, TD-error, and critic iterates are uniformly bounded with probability one. Note that when $\tau \rightarrow \infty$, \eqref{eq:eta_dot} has $\bar{\eta}_\theta$ defined as in \eqref{eq:eta_average_on_k} as its unique globally asymptotically stable equilibrium with $V_2(\eta) = (\eta - \bar{\eta}_\theta)^2 $ serving as the associated Lyapunov function. 

Next, suppose that $v = v^\pi$ is a solution to the system $\Phi^\top A_\theta  \Phi v  = 0.$ Under Assumption \ref{ass:independent_features}, using the same arguments as in the proof of Lemma 5 in \cite{bhatnagar2008incremental}, $v^\pi$ is the unique globally asymptotically stable equilibrium of the ODE \eqref{eq:ode_v2}. Assumption \ref{ass:borkar_ass_v} is now verified and under Assumption \ref{ass:step_size}, the claim follows from Theorem 2.2, pp. 450 of \cite{borkar2000ode}.

\end{proof}

\subsubsection{Auxiliary Lemma for Theorem \ref{theorem:critic_conv}}
The following Lemma computes the expectation of the critic update $\mathbb{E}\left[ \delta(\TO) \phi (\TS) \right]$.
\begin{lemma}
\label{lemma:expected_delta}
Assume we have full $K$ RBs, each one with $N$ transitions. Then the following holds
\begin{equation*}
 \mathbb{E}_{k \sim \beta, \TO \sim\text{RB}(k), \TS,\TA, \TSP \in \TO}
    \left[\delta(\TO) \phi(\TS)\right] 
    = \Phi^\top A_\theta \Phi v + \Phi^\top b_\theta,
\end{equation*}
where $A_\theta$ and $b_\theta$ are defined in \eqref{eq:def_A_b_of_TD_infty}. 
\end{lemma}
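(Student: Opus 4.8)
The plan is to compute the expectation $\mathbb{E}_{k \sim \beta, \TO \sim\text{RB}(k)}\left[\delta(\TO) \phi(\TS)\right]$ directly by conditioning on the replay buffer selection $k$ (which occurs with probability $\beta_k$) and then on which of the $N$ stored transitions in $RB(k)$ is sampled. Since the policy is held fixed for the critic analysis, each of the $N$ samples in a buffer is drawn from the same induced stationary-type distribution, and the key observation is that a transition $\TO = \{\TS, \TA, \tilde r, \TSP\}$ stored in $RB(k)$ has $\TS$ distributed according to the state-visitation measure $\mu_{\theta,k}$, $\TA \sim \pi_\theta(\cdot|\TS)$, and $\TSP \sim P_k(\cdot|\TS,\TA)$. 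First I would write the expectation as a weighted sum $\sum_{k=1}^K \beta_k \frac{1}{N}\sum_{n=1}^N \mathbb{E}[\cdot]$ over the relevant distributions, and then substitute the definition $\delta(\TO) = \tilde r - \eta + \phi(\TSP)^\top v - \phi(\TS)^\top v$.

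Next I would split $\delta(\TO)\phi(\TS)$ into its four terms and take each expectation separately. For the term $\phi(\TSP)^\top v \,\phi(\TS)$, I would first take the inner expectation over $\TSP$ given $(\TS,\TA)$, which produces $\sum_{\TSP} P_k(\TSP|\TS,\TA)\phi(\TSP)^\top v$, then average over $\TA \sim \pi_\theta$ to get the row of the induced transition matrix $P_{\theta,k}$ acting on $\Phi v$; averaging over $\TS \sim \mu_{\theta,k}$ introduces the diagonal matrix $S_{\theta,k} = \mathrm{diag}(\mu_{\theta,k})$ and the outer $\phi(\TS)$ factor, yielding a contribution of the form $\Phi^\top S_{\theta,k} P_{\theta,k} \Phi v$. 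The term $-\phi(\TS)^\top v\,\phi(\TS)$ similarly gives $-\Phi^\top S_{\theta,k} \Phi v = \Phi^\top S_{\theta,k}(-I)\Phi v$. The reward term $\tilde r\,\phi(\TS)$ averages to $\Phi^\top S_{\theta,k} r_{\theta,k}$ using the definition $r_{\theta,k}(s) = \sum_a \pi_\theta(a|s)r(s,a)$, and the term $-\eta\,\phi(\TS)$ becomes $-\eta_{\theta,k}\Phi^\top S_{\theta,k} e$ once $\eta$ is taken at its fixed point $\eta_{\theta,k}$ for that buffer. Collecting the $P_{\theta,k} - I$ pieces against $\Phi v$ and the $r_{\theta,k} - \eta_{\theta,k}e$ pieces, and weighting by $\beta_k$, reproduces exactly $\Phi^\top A_\theta \Phi v + \Phi^\top b_\theta$ with $A_\theta, b_\theta$ as in \eqref{eq:def_A_b_of_TD_infty}; the $\frac{1}{N}\sum_{n=1}^N$ collapses to $1$ because under a fixed policy all $N$ summands are identical, as noted before the theorem statement.

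The main obstacle I anticipate is justifying precisely that a transition stored in $RB(k)$ has its starting state $\TS$ distributed according to the stationary distribution $\mu_{\theta,k}$, rather than some transient mixture. Under a truly fixed policy the induced chain on $M_k$ is stationary, so each stored $\TS$ is a sample from $\mu_{\theta,k}$ and the $n$-index averaging is trivial; but one must be careful that the buffer contents, the independence of the $\beta$-selection $J$ from the stored data, and the assumption of "full" buffers (each with $N$ transitions) are all invoked correctly so that the sampling distribution of $\TO$ factorizes as $\mu_{\theta,k}(\TS)\pi_\theta(\TA|\TS)P_k(\TSP|\TS,\TA)$. Once this factorization is established, the remaining computation is the routine linear-algebra bookkeeping of assembling the four expectation terms into matrix form, which matches the online TD(0) fixed-point structure $\Phi^\top D(P-I)\Phi v + \Phi^\top D(r-\eta)$ recalled in the intuition after \eqref{eq:def_A_b_of_TD}, here superposed over the $K$ buffers with weights $\beta_k$.
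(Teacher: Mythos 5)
Your proposal is correct and follows essentially the same route as the paper's proof: the same decomposition $\sum_k \beta_k \frac{1}{N}\sum_{n=1}^N \mathbb{E}[\cdot]$ over buffer choice and slot index, followed by the same matrix assembly of the four terms of $\delta(\TO)\phi(\TS)$ into $\Phi^\top S_{\theta,k}(P_{\theta,k}-I)\Phi v + \Phi^\top S_{\theta,k}(r_{\theta,k}-\eta_{\theta,k}e)$ (the paper simply cites Lemma 6.5 of Bertsekas--Tsitsiklis for this step rather than expanding it by hand). The one obstacle you flag---whether $\TS$ is distributed as $\mu_{\theta,k}$ rather than a transient mixture---is resolved in the paper exactly as you anticipate needing to: it first writes the inner expectation with the time-dependent state distributions $\rho_{t(k,n)}$, obtaining $\Phi^\top A_\tau \Phi v + \Phi^\top b_\tau$, and then passes to the limit $\tau\to\infty$ where $\rho_{t(k,n)}\to\mu_{\theta,k}$ for every slot $n$, which is what collapses the $\frac{1}{N}\sum_n$ average.
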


\begin{proof}
We note that due to the probabilistic nature of Algorithm \ref{alg:s2r}, we do not know explicitly when each sample was pushed to any of the RBs. Next, we compute the expectation of the critic update with linear function approximation according to Algorithm  \ref{alg:s2r}. 

\begin{equation}
\label{eq:grand_expectation}
\begin{split}
    &\mathbb{E}_{k \sim \beta, \TO \sim\text{RB}(k), \TS, \TA, \TSP \in \TO}
    \left[\delta(\TO) \phi(\TS)\right] \\
    &= \mathbb{E}_{k \sim \beta, \TO \sim\text{RB}(k), \TS,\TA,\TSP \in \TO}
    \left[
    \left(r(\TS, \TA) - \eta + \phi(\TSP)^\top v -  \phi(\TS)^\top v\right) \phi(\TS)
    \right] \\
    &= \mathbb{E}_{k \sim \beta}
    \left[
    \mathbb{E}_{\TO \sim\text{RB}(k), \TS,\TA,\TSP \in \TO}
    \left[
    \left(r(\TS, \TA) - \eta + \phi(\TSP)^\top v -  \phi(\TS)^\top v\right) \phi(\TS)
    \right]
    \right] \\    
    &= \sum_{k=1}^K \beta_k
    \mathbb{E}_{\TO \sim\text{RB}(k), \TS,\TA,\TSP \in \TO}
    \left[
    \left(r(\TS, \TA) - \eta + \phi(\TSP)^\top v -  \phi(\TS)^\top v\right) \phi(\TS)
    \right] \\
    &= \sum_{k=1}^K \beta_k
    \mathbb{E}_{\TO \sim\text{RB}(k)}
    \left[
    \mathbb{E}_{\TS,\TA,\TSP \in \TO_k}
    \left[
    \left(r(\TS, \TA) - \eta + \phi(\TSP)^\top v -  \phi(\TS)^\top v\right) \phi(\TS)
    \right]
    \right] \\        
    &= 
    \sum_{k=1}^K \beta_k
    \sum_{n=1}^N \frac{1}{N}
    \mathbb{E}_{\TS,\TA,\TSP \in \TO_{k,n}}
    \left[
    \left(r(\TS, \TA) - \eta + \phi(\TSP)^\top v -  \phi(\TS)^\top v\right) \phi(\TS)
    \right].         
\end{split}
\end{equation}
We note that in the last expression, the inner expectation is according to a tuple of indices $(k,n)$ where $k$ corresponds to $RB(k)$ and $n$ corresponds to the $n$-th transition in this $RB(k)$. Also, this sample $(k,n)$ corresponds to some $\theta_{t(k,n)}$. Recall that the time $t(k,n)$ is the time that the agent interacted with the $k$-th MDP and since then $n$ samples were added to $RB(k)$ (and the $n$ oldest samples were removed). In other words, sample $(k, n=1)$ is the newest sample in $RB(k)$ while sample $(k, n=N)$ is the oldest. Abusing notation, we define
\begin{equation}
    t(k,n) = \left\{t | \textrm{For the time $\tau$, the time $t$ the $n$-th sample in $RB(k)$ was pushed.} \right\}
\end{equation}
Next, we define the induced MC for the time $t(k,n)$ with a corresponding parameter $\theta_{t(k,n)}$. For this parameter, we denote the corresponding state distribution vector $\rho_{t(k,n)}$ and a transition matrix $P_{t(k,n)}$ (both induced by the policy $\pi_{\theta_{t(k,i)}}$. In addition, we define the following diagonal matrix $S_{t(k,n)}\triangleq \textrm{diag}(\rho_{t(k,n)})$. Similarly to  \cite{bertsekas1996neuro} Lemma 6.5, pp.298, we can substitute the inner expectation 
\begin{equation}
\label{eq:inner}
\begin{split}
    & \mathbb{E}_{\TS,\TA, \TSP \in \TO_{k,n}}
    \left[
    \left(r(\TS, \TA) - \eta + \phi(\TSP)^\top v -  \phi(\TS)^\top v\right) \phi(\TS)
    \right]
    = \\
    & \Phi^\top S_{t(k,n)} \left(P_{t(k,n)} - I \right) \Phi v + \Phi^\top S_{t(k,n)} (r_{t(k,n)}- \eta_{\theta,k}  e),
\end{split}
\end{equation}
where $I$ is the $|\mathcal{S}| \times |\mathcal{S}|$ identity matrix, $e$ in $|\mathcal{S}| \times 1$ vector of ones and $r_{k,n}$ is a $|\mathcal{S}| \times 1$ vector defined as $r_{t(k,n)}(s) = \sum_{a} \pi_{\theta_{t(k,n)}} (a|s) r(s,a)$. Combining equations \eqref{eq:def_A_b_of_TD},  \eqref{eq:grand_expectation} and \eqref{eq:inner} yields
\begin{equation}
\begin{split}
    \label{eq:k_envs_td_way1}
    \sum_{k=1}^K 
    \sum_{n=1}^N \frac{\beta_k}{N}
    \left(
    \Phi^\top S_{t(k,n)} \left(P_{t(k,n)} - I\right) \Phi v + \Phi^\top S_{t(k,n)} (r_{t(k,n)}-\eta_{\theta, k} e) \right)= \Phi^\top A_\tau \Phi v + \Phi^\top b_\tau,
\end{split}
\end{equation}
In the limit, $\tau \rightarrow \infty$ and $\rho_{t(k,n)} \rightarrow \mu_{\theta,k}$ for all index $n$. Using $A_\theta$ and $b_\theta$ defined in \eqref{eq:def_A_b_of_TD_infty}, \eqref{eq:grand_expectation} can be expressed as 
\begin{equation}
\begin{split}
    \mathbb{E}_{k \sim \beta, \TO \sim\text{RB}(k), \TS,\TA, \TSP \in \TO}
    \left[\delta(\TO) \phi(\TS)\right] 
    & =
    \Phi^\top A_\theta \Phi v + \Phi^\top b_\theta.
\end{split}
\end{equation}
\end{proof}

%%%%%%%%%%%%%%%%%%%%%%%%%%%%%%%%%%%%%%%%%%%%%%%%%%%%%%%%%%%%%%%%%%%%%%%%%%%
\subsection{Proof of Theorem \ref{theorem:actor_conv}}
\label{appendix:actor_conv}

\begin{proof}
Recall that our TD-error update in line \ref{line:TD} in Algorithm \ref{alg:s2r} is defined as $\delta(\TO^z) = \tilde{r}^z - \eta + \phi(\TSP^z)^\top v -  \phi(\TS^z)^\top v$,
where $\TO^z = \{\TS^z, \TA^z, \tilde {r}^z, \TSP^{z}\}$. In the actor update in line \ref{line:actor} in Algorithm  \ref{alg:s2r} we use an empirical mean over several sampled observations, denoted as $\{\TO^z\}_{z=1}^{N_{\text{samples}}}$. Then, the actor update is defined as
\[\theta' = \Gamma \left(\theta - \alpha^\theta \frac{1}{N_{\text{samples}}} \sum_{z}\delta(\TO^z) \nabla \log \pi_\theta (\TA^z |\TS^z) \right).\]
Consider $N_{\text{samples}} = 1$. For a single sample update, we will use the following notations for the rest of the proof: $\TO = \{\TS, \TA, \tilde {r}, \TSP\}$ and $\delta(\TO) = \tilde{r} - \eta + \phi(\TSP)^\top v -  \phi(\TS)^\top v$.

In this proof we follow the proof of Theorem 2 in \cite{bhatnagar2008incremental}. Let $\delta^\pi(\TO) = \tilde r - \eta + \phi(\TSP)^\top v^\pi - \phi(\TS)^\top v^\pi$, where $v^\pi$ is the convergent parameter of the critic recursion with probability one (see its definition in the proof for Theorem \ref{theorem:critic_conv}). Observe that the actor parameter update from Algorithm  \ref{alg:s2r} can be written as 
\begin{align*}
    \theta_{\tau+1}  &= \Gamma \Big( \theta_\tau - \alpha^{\theta}_\tau \big( \delta(\TO) \nabla_\theta \log \pi_{\theta} (\TA|\TS)  + F_\tau^{\theta} -  F_\tau^{\theta} + N_\tau^{\theta_\tau} - N_\tau^{\theta_\tau} \big) \Big) \\
    & = \Gamma \Big( \theta_\tau -  \alpha^{\theta}_\tau \big( M_{\tau+1}^{\theta} + ( F_{\tau}^{\theta} - N_{\tau}^{\theta_\tau} ) +  N_{\tau}^{\theta_\tau} \big) \Big)
\end{align*}
where
\begin{equation*}
\label{eq:actor_Martingale_F}
\begin{split}
    F_\tau^{\theta} & \triangleq \mathbb{E}_{k \sim \beta, \TO \sim\text{RB}(k), \TA,\TS,\TSP \in \TO} \left[\delta(\TO) \nabla_\theta \log \pi_{\theta} (\TA|\TS) \middle| \mathcal{F}_\tau \right] \\
    M_{\tau+1}^{\theta} & \triangleq \delta(\TO) \nabla_\theta \log \pi_{\theta} (\TA|\TS) - F_\tau^{\theta}\\
    N_\tau^{\theta} & \triangleq \mathbb{E}_{k \sim \beta, \TO \sim\text{RB}(k), \TS,\TA,\TSP \in \TO} \left[\delta^{\pi_\theta}(\TO) \nabla_\theta \log \pi_{\theta} (\TA|\TS) \middle| \mathcal{F}_\tau \right]
\end{split}
\end{equation*}

and $\mathcal{F}_\tau$ is a $\sigma$-algebra defined as
$\mathcal{F}_\tau \triangleq \{\eta_t, v_t, \theta_t, M_{t}^\eta, M_{t}^v, M_{t}^\theta: t \le \tau\}$. 

Since the critic converges along the faster timescale, from Theorem \ref{theorem:critic_conv} it follows that $F_{\tau}^{\theta} - N_{\tau}^{\theta_\tau} = o(1)$. Now, let
\[M_2(\tau) = \sum_{r=0}^{\tau-1} \alpha^{\theta}_r M_{r+1}^\theta, \tau \ge 1.\]
The quantities $\delta (\TO)$ can be seen to be uniformly bounded since from the proof in Theorem \ref{theorem:critic_conv}, $\{ \eta_{\tau} \}$ and $\{v_\tau \}$ are bounded sequences. Therefore, using Assumption \ref{ass:step_size}, $\{ M_2(\tau)\}$ is a convergent martingale sequence \cite{bhatnagar2004simultaneous}. %Thus, for any $T > 0$, with $n_T \triangleq \min \{ m \ge n | \sum_{r=n}^m \alpha^\theta_r \ge T \} $,  we have that $\sum_{r=n}^{n_T} \alpha^\theta_r M_{\tau+1}^\theta \rightarrow 0 $ a.s as $n \rightarrow \infty$.

Consider the actor update along the slower timescale corresponding to $\alpha^\theta_\tau$ in line \eqref{line:actor} in Algorithm \ref{alg:s2r}. Let $v(\cdot)$ be a vector field on a set $\Theta$. Define another vector field: $\hat \Gamma \big(v(y) \big) = \lim_{0 < \eta \rightarrow 0} = \Big( \frac{\Gamma\big(y + \eta v(y) \big) - y}{\eta}\Big)$. In case this limit is not unique, we let $\hat \Gamma \big(v(y) \big)$ be the set of all possible limit points (see pp. 191 of \cite{kushner2012stochastic}). Consider now the ODE
\begin{align}
\label{eq:ode_theta_bias}
    \dot \theta & = \hat \Gamma \Big( - \mathbb{E}_{k \sim \beta, \TO \sim\text{RB}(k), \TS, \TA, \TSP \in \TO}
        \left[\delta^{\pi_\theta} (\TO) \nabla_\theta \log \pi_{\theta} (\TA|\TS) \right] \Big) 
\end{align}
Substituting the result from Lemma \ref{lemma:expected_delta_grad_log}, the above ODE is analogous to 
\begin{equation}
\label{eq:ode_theta_bias2}
    \dot \theta = \hat \Gamma (- \nabla_\theta \bar{\eta}_\theta + \xi^{\pi_\theta}) = \hat \Gamma \big( - N_\tau^\theta \big)
\end{equation}

where $\xi^{\pi_\theta} = \sum_{k=1}^K \beta_k \sum_{\TS} \mu_{\theta,k}  (\TS) \Big(  \phi(\TS)^\top \nabla_\theta v^{\pi_\theta} -  \nabla_\theta \bar{V}_k^{\pi_{\theta}}(\TS) \Big) $. Consider also an associated ODE:
\begin{equation}
\label{eq:ode_theta}
    \dot \theta  = \hat \Gamma \big(- \nabla_\theta \bar{\eta}_\theta \big) 
\end{equation}

We now show that $h_1(\theta_\tau) \triangleq - N_\tau^{\theta_\tau}$ is Lipschitz continuous. Here $v^{\pi_{\theta_\tau}}$ corresponds to the weight vector to which the critic update  converges along the faster timescale when the corresponding policy is $\pi_{\theta_\tau}$ (see Theorem \ref{theorem:critic_conv}). Note that $\mu_{\theta, k}(s), s \in \mathcal{S}$, $k \in \{1, \ldots, K\}$ are continuously differentiable in $\theta$ and have bounded derivatives. Also, $\bar{\eta}_{\theta_\tau}$ is continuously differentiable as well and has bounded derivative as can also be seen from \eqref{eq:actor_loss}. Further, $v^{\pi_{\theta_\tau}}$ can be seen to be continuously differentiable with bounded derivatives. Finally, $\nabla^2 \pi_{\theta_\tau}(a|s)$ exists and is bounded. Thus $ h_1(\theta_\tau)$ is a Lipschitz continuous function and the ODE (\ref{eq:ode_theta_bias}) is well posed.

Let $\mathcal{Z}$ denote the set of asymptotically stable equilibria of (\ref{eq:ode_theta}) i.e., the local minima of $\bar{\eta}_\theta$, and let $\mathcal{Z}^\epsilon$ be the $\epsilon$-neighborhood of $\mathcal{Z}$. To complete the proof, we are left to show that as $\sup_\theta \| \xi^{\pi_\theta}\| \rightarrow 0 $ (viz. $\delta \rightarrow 0 $), the trajectories of (\ref{eq:ode_theta_bias2}) converge to those of (\ref{eq:ode_theta}) uniformly on compacts for the same initial condition in both. This claim follows the same arguments as in the proof of Theorem 2 in \cite{bhatnagar2008incremental}.

%Note that for $s \in \mathcal{S}, a \in \mathcal{A}$,
%\begin{align*}
%    \nabla^2 \pi_{\theta_\tau}(a|s) & = \pi_{\theta_\tau}(a|s) \big( \nabla_\theta \log \pi_{\theta_\tau} (a|s)^\top\nabla_\theta \log \pi_{\theta_\tau} (a|s)  - \sum_{a' \in \mathcal{A}} \pi_{\theta_\tau}(a'|s) \nabla_\theta \log \pi_{\theta_\tau} (a'|s)^\top \phi(s, a') \big)
%\end{align*}

%where each $\phi(s, a')$ is a d-dimensional feature vector for the state–action pair $(s,a)$. Thus $\nabla^2 \pi_{\theta_\tau}(a|s)$ exists and is bounded. 

%\begin{align}
%\label{eq:theta_iteration}
%    \theta_{\tau+1} &= \theta_\tau + \alpha_\tau^{\theta} \left( F_\tau^{\theta} + M_{\tau+1}^\theta \right),
%\end{align}

%We use Theorem 2.2 of \citet{borkar2000ode} to prove convergence of these iterates. Briefly, this theorem states that given an iteration as in \eqref{eq:theta_iteration}, this iteration is bounded w.p.1 if
%\begin{assumption}
$ $
%\label{ass:borkar_ass_theta}
%\begin{enumerate}
%\item $F_\tau^{\theta}$ is Lipschitz, the function $F_\infty (\theta) = \lim_{\sigma \rightarrow \infty} F^{\theta} (\sigma \theta )/\sigma$ is Lipschitz, and  $F_\infty(\theta)$ is asymptotically stable in the origin.
%\item The sequence $M_{\tau+1}^{\theta}$  is a martingale difference noise and for $C_0^\theta$
%\[\mathbb{E}\left[ (M_{\tau+1}^{\theta})^2 \middle| \mathcal{F}^\theta_\tau \right] \le C_0^\theta (1 + \|\theta_\tau \|^2).\]
%\end{enumerate}
%\end{assumption}

\end{proof}

\subsubsection{Auxiliary Lemma for Theorem \ref{theorem:actor_conv}}
The following Lemma computes the expectation of  $\mathbb{E}\left[ \delta^{\pi_\theta}(\TO) \nabla_\theta \log \pi_{\theta} (\TA|\TS) \right]$.

\begin{lemma}
\label{lemma:expected_delta_grad_log}
Assume we have full $K$ RBs, each one with $N$ transitions. Then the following holds
\begin{equation*}
    \begin{split}
        & \mathbb{E}_{k \sim \beta, \TO \sim\text{RB}(k), \TS, \TA, \TSP \in \TO}
        \left[\delta^{\pi_\theta} (\TO) \nabla_\theta \log \pi_{\theta} (\TA|\TS) \right] \\
        & = \nabla_\theta \bar{\eta}_\theta - \sum_{k=1}^K \beta_k \sum_{\TS} \mu_{\theta,k}  (\TS) \Big(  \phi(\TS)^\top \nabla_\theta v^{\pi_\theta} -  \nabla_\theta \bar{V}_k^{\pi_{\theta}}(\TS) \Big),
    \end{split}
\end{equation*}
where $\bar{V}_k^{\pi_{\theta}}(\TS) = \sum_{\TA \in \mathcal{A}} \pi_{\theta} (\TA|\TS) \left( r(\TS,\TA) - \eta_{\theta, k} + \sum_{\TSP \in \mathcal{S}} P_k (\TSP|\TS, \TA) \phi (\TSP)^\top v^{\pi_\theta} \right)$.
\end{lemma}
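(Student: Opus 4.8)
The plan is to mirror the computation of Lemma~\ref{lemma:expected_delta}, but with the critic feature $\phi(\TS)$ replaced by the score function $\nabla_\theta\log\pi_\theta(\TA|\TS)$, and then to convert the resulting object into the stated combination of gradients via a product-rule identity and the stationarity of $\mu_{\theta,k}$; the statement is essentially an average-reward policy gradient theorem under linear function approximation. First I would expand the expectation exactly as in \eqref{eq:grand_expectation}, conditioning on $k\sim\beta$ and on the uniform draw of the $n$-th slot of $\mathrm{RB}(k)$, and pass to the limit $\tau\to\infty$ so that $\rho_{t(k,n)}\to\mu_{\theta,k}$ for every $n$. This yields
\begin{equation*}
\sum_{k=1}^K\beta_k\sum_{\TS}\mu_{\theta,k}(\TS)\sum_{\TA}\pi_\theta(\TA|\TS)\,\nabla_\theta\log\pi_\theta(\TA|\TS)\sum_{\TSP}P_k(\TSP|\TS,\TA)\,\delta^{\pi_\theta}(\TO).
\end{equation*}

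Next I would apply the score-function identity $\pi_\theta(\TA|\TS)\nabla_\theta\log\pi_\theta(\TA|\TS)=\nabla_\theta\pi_\theta(\TA|\TS)$ and observe that the summands of $\delta^{\pi_\theta}(\TO)$ that are independent of the action, namely the constant $-\eta$ and $-\phi(\TS)^\top v^\pi$, are annihilated because $\sum_{\TA}\nabla_\theta\pi_\theta(\TA|\TS)=\nabla_\theta 1=0$. In particular the precise value of the scalar $\eta$ is irrelevant, so I may freely insert the per-environment baseline $-\eta_{\theta,k}$; this dissolves the apparent mismatch between the single average-reward estimate and the $\eta_{\theta,k}$ appearing inside $\bar V_k^{\pi_\theta}$. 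What remains is $\sum_k\beta_k\sum_{\TS}\mu_{\theta,k}(\TS)\sum_{\TA}\nabla_\theta\pi_\theta(\TA|\TS)\,\bar Q_k(\TS,\TA)$, where $\bar Q_k(\TS,\TA)=r(\TS,\TA)-\eta_{\theta,k}+\sum_{\TSP}P_k(\TSP|\TS,\TA)\phi(\TSP)^\top v^\pi$ is the action-value compatible with $\bar V_k^{\pi_\theta}(\TS)=\sum_{\TA}\pi_\theta(\TA|\TS)\bar Q_k(\TS,\TA)$.

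The heart of the argument is a product-rule expansion. Differentiating the definition of $\bar V_k^{\pi_\theta}$ gives $\sum_{\TA}\nabla_\theta\pi_\theta(\TA|\TS)\bar Q_k(\TS,\TA)=\nabla_\theta\bar V_k^{\pi_\theta}(\TS)-\sum_{\TA}\pi_\theta(\TA|\TS)\nabla_\theta\bar Q_k(\TS,\TA)$, and since only $\eta_{\theta,k}$ and $v^\pi$ carry $\theta$-dependence inside $\bar Q_k$, the last term equals $-\nabla_\theta\eta_{\theta,k}+\sum_{\TA}\pi_\theta(\TA|\TS)\sum_{\TSP}P_k(\TSP|\TS,\TA)\phi(\TSP)^\top\nabla_\theta v^\pi$. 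Substituting back, the $\nabla_\theta\bar V_k^{\pi_\theta}$ pieces already match the target; the $\eta$ pieces collapse via $\sum_{\TS}\mu_{\theta,k}(\TS)=1$ into $\sum_k\beta_k\nabla_\theta\eta_{\theta,k}=\nabla_\theta\bar\eta_\theta$. The remaining step, which I expect to be the crux, is to collapse the next-state term: writing $\sum_{\TA}\pi_\theta(\TA|\TS)P_k(\TSP|\TS,\TA)=P_{\theta,k}(\TSP|\TS)$ and invoking the invariance $\sum_{\TS}\mu_{\theta,k}(\TS)P_{\theta,k}(\TSP|\TS)=\mu_{\theta,k}(\TSP)$ converts $\sum_{\TS}\mu_{\theta,k}(\TS)\sum_{\TSP}P_{\theta,k}(\TSP|\TS)\phi(\TSP)^\top\nabla_\theta v^\pi$ into $\sum_{\TS}\mu_{\theta,k}(\TS)\phi(\TS)^\top\nabla_\theta v^\pi$ after relabeling. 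Assembling the three contributions reproduces exactly $\nabla_\theta\bar\eta_\theta-\sum_k\beta_k\sum_{\TS}\mu_{\theta,k}(\TS)\big(\phi(\TS)^\top\nabla_\theta v^\pi-\nabla_\theta\bar V_k^{\pi_\theta}(\TS)\big)$. The main obstacle is keeping the signs straight through the product rule and ensuring the stationarity identity applies to each environment's one-step marginal, which is where Assumption~\ref{ass:aperiodic_k_mdps} enters to guarantee that $\mu_{\theta,k}$ exists and is invariant under $P_{\theta,k}$.
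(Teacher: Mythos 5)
Your proposal is correct and follows essentially the same route as the paper's proof: expand the expectation under the limiting distributions $\mu_{\theta,k}$, apply the product rule to $\bar V_k^{\pi_\theta}(\TS)=\sum_{\TA}\pi_\theta(\TA|\TS)\bar Q_k(\TS,\TA)$, collapse the $\eta$ terms via $\sum_{\TS}\mu_{\theta,k}(\TS)=1$, and use stationarity of $\mu_{\theta,k}$ under $P_{\theta,k}$ to absorb the next-state feature term. If anything, you are slightly more explicit than the paper in justifying why the action-independent terms $-\eta$ and $-\phi(\TS)^\top v^{\pi}$ in $\delta^{\pi_\theta}$ can be exchanged for the per-environment baseline $-\eta_{\theta,k}$ via $\sum_{\TA}\nabla_\theta\pi_\theta(\TA|\TS)=0$, a step the paper's proof leaves implicit when it equates $\sum_k\beta_k\sum_{\TS}\mu_{\theta,k}(\TS)\sum_{\TA}\nabla_\theta\pi_\theta(\TA|\TS)\bar Q_k(\TS,\TA)$ with the expectation of $\delta^{\pi_\theta}(\TO)\nabla_\theta\log\pi_\theta(\TA|\TS)$.
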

\begin{proof}
We compute the expectation of $\delta^{\pi_\theta}(\TO)  \nabla_\theta \log \pi_\theta (\TA|\TS)$ with linear function approximation according to Algorithm \ref{alg:s2r}. Due to the probabilistic nature of Algorithm \ref{alg:s2r}, we do not know explicitly when each transition was pushed to any of the RBs.  Recall that the tuple $(k,n)$ corresponds to some  $\theta_{t(k,n)}$ where time $t(k,n)$ was defined in Section \ref{section:algorithm}.  %However, note that this $\theta_{t(k,n)}$ is not the same as $\theta$ in  $\log \pi_{\theta} (\TA|\TS)$ term since we use old transitions from $RB(k)$ and not the latest transition in the $\{s_\tau\}_{\tau \ge 0}$ Markov chain. 
We use the same notations for the  state distribution vector $\rho_{t(k,n)}$ and a transition matrix $P_{t(k,n)}$ (both induced by the policy $\pi_{\theta_{t(k,n)}}$, as in the proof for Lemma \ref{lemma:expected_delta}). 
We define now the following term:
\begin{equation}
\label{eq:expected_v_pi}
    \begin{split}
        \bar{V}_k^{\pi_{\theta_{t(k,n)}}}(\TS) &= \sum_{\TA \in \mathcal{A}}  \pi_{\theta_{t(k,n)}} (\TA|\TS) \bar{Q}_k^{\pi_{\theta_{t(k,n)}}} (\TS, \TA) 
        =\sum_{\TA \in \mathcal{A}} \pi_{\theta_{t(k,n)}} (\TA|\TS) \left( r(\TS,\TA) - \eta_{\theta, k} + \sum_{\TSP \in \mathcal{S}} P_k (\TSP|\TS, \TA) \phi (\TSP)^\top v^{\pi_\theta} \right),
    \end{split}
\end{equation}
where $ \bar{V}_k^{\pi_{\theta_{t(k,n)}}}(\TS)$ and $ \bar{Q}_k^{\pi_{\theta_{t(k,n)}}}(\TS, \TA)$ correspond to policy $\pi_{\theta_{t(k,n)}}$. Note that here, the convergent critic parameter $v^{\pi_\theta}$ is used. Let's look at the gradient of \eqref{eq:expected_v_pi}:
\begin{equation*}
    \begin{split}
        \nabla_\theta \bar{V}_k^{\pi_{\theta_{t(k,n)}}}(\TS) & = \nabla_\theta \left(\sum_{\TA \in \mathcal{A}}  \pi_{\theta_{t(k,n)}} (\TA|\TS) \bar{Q}_k^{\pi_{\theta_{t(k,n)}}} (\TS, \TA) \right) \\
        & = \sum_{\TA \in \mathcal{A}} \nabla_\theta \pi_{\theta_{t(k,n)}} (\TA|\TS) \left( r(\TS,\TA) - \eta_{\theta, k} + \sum_{\TSP \in \mathcal{S}} P_k (\TSP|\TS, \TA) \phi (\TSP)^\top v^{\pi_\theta} \right) \\
        & + \sum_{\TA \in \mathcal{A}} \pi_{\theta_{t(k,n)}} (\TA|\TS) \left( - \nabla_\theta \eta_{\theta, k} + \sum_{\TSP \in \mathcal{S}} P_k (\TSP|\TS, \TA) \phi (\TSP)^\top \nabla_\theta v^{\pi_\theta} \right) \\
        & = \sum_{\TA \in \mathcal{A}} \nabla_\theta \pi_{\theta_{t(k,n)}} (\TA|\TS) \left( r(\TS,\TA) - \eta_{\theta, k} + \sum_{\TSP \in \mathcal{S}} P_k (\TSP|\TS, \TA) \phi (\TSP)^\top v^{\pi_\theta} \right) \\
        & - \nabla_\theta \eta_{\theta, k}  + \sum_{\TA \in \mathcal{A}} \pi_{\theta_{t(k,n)})} (\TA|\TS) \sum_{\TSP \in \mathcal{S}} P_k (\TSP|\TS, \TA) \phi (\TSP)^\top \nabla_\theta v^{\pi_\theta} 
    \end{split}
\end{equation*}
where with abuse of notation,  $\nabla_\theta \pi_{\theta_{t(k,n)}} (\TA|\TS) = \nabla_\theta \pi_{\theta|\theta = \theta_{t(k,n)}} (\TA|\TS)$. In the limit, $\tau \rightarrow \infty$ and $\rho_{t(k,n)} \rightarrow \mu_{\theta,k}$ for all index $n$. Summing both sides over $\beta$ distribution and stationary distribution $\mu_{\theta,k}$
\begin{equation*}
    \begin{split}
        & \sum_{k=1}^K \beta_k \sum_{\TS} \mu_{\theta,k} (\TS) \nabla_\theta \bar{V}_k^{\pi_{\theta}}(\TS) \\
        & = \sum_{k=1}^K \beta_k \sum_{\TS} \mu_{\theta,k} (\TS) 
        \sum_{\TA \in \mathcal{A}} \nabla_\theta \pi_{\theta} (\TA|\TS) \left( r(\TS,\TA) - \eta_{\theta,k} + \sum_{\TSP \in \mathcal{S}} P_k (\TSP|\TS, \TA) \phi (\TSP)^\top v^{\pi_\theta} \right) \\
        & + \sum_{k=1}^K \beta_k  \sum_{\TS} \mu_{\theta,k} (\TS) \left( - \nabla_\theta \eta_{\theta,k}  + \sum_{\TA \in \mathcal{A}} \pi_{\theta} (\TA|\TS)  \sum_{\TSP \in \mathcal{S}} P_k (\TSP|\TS, \TA) \phi (\TSP)^\top \nabla_\theta v^{\pi_\theta} \right) \\
        & = \mathbb{E}_{k \sim \beta, \TO \sim\text{RB}(k), \TS, \TA, \TSP \in \TO}
    \left[\delta^{\pi_\theta} (\TO) \nabla_\theta \log \pi_{\theta} (\TA|\TS)  \right] \\
        & - \nabla_\theta \bar{\eta}_\theta  + \sum_{k=1}^K \beta_k \sum_{\TS} \mu_{\theta,k} (\TS) \sum_{\TA \in \mathcal{A}} \pi_{\theta} (\TA|\TS) \sum_{\TSP \in \mathcal{S}} P_k (\TSP|\TS, \TA) \phi (\TSP)^\top \nabla_\theta v^{\pi_\theta} 
    \end{split}
\end{equation*}
We will write in short $\mathbb{E}
    \left[\delta^{\pi_\theta} (\TO) \nabla_\theta \log \pi_{\theta} (\TA|\TS) \right]  = \mathbb{E}_{k \sim \beta, \TO \sim\text{RB}(k), \TS, \TA, \TSP \in \TO}
    \left[\delta^{\pi_\theta} (\TO) \nabla_\theta \log \pi_{\theta_{t(k,i)}} (\TA|\TS) \right] $. 
Then:
\begin{equation*}
    \begin{split}
        \nabla_\theta \bar{\eta}_\theta &= \mathbb{E}
        \left[\delta^{\pi_\theta} (\TO) \nabla_\theta \log \pi_{\theta} (\TA|\TS) \right] \\ 
        & + \sum_{k=1}^K \beta_k \sum_{\TS} \mu_{\theta,k} (\TS) \left( \sum_{\TA \in \mathcal{A}} \pi_{\theta} (\TA|\TS)  \sum_{\TSP \in \mathcal{S}} P_k (\TSP|\TS, \TA) \phi (\TSP)^\top \nabla_\theta v^{\pi_\theta}  - \nabla_\theta \bar{V}_k^{\pi_{\theta}}(\TS)\right).
    \end{split}
\end{equation*}
Since $\mu_{\theta,k}$ is the stationary distribution for each environment $k$, 
\begin{equation*}
    \begin{split}
        \sum_{\TS}  \mu_{\theta,k} (\TS) \sum_{\TA \in \mathcal{A}} \pi_{\theta} (\TA|\TS)  \sum_{\TSP \in \mathcal{S}} P_k (\TSP|\TS, \TA) \phi (\TSP)^\top \nabla_\theta v^{\pi_\theta}  
        & = \sum_{\TS}  \mu_{\theta,k} (\TS)  \sum_{\TSP \in \mathcal{S}} P_{\theta, k} (\TSP|\TS) \phi (\TSP)^\top \nabla_\theta v^{\pi_\theta} \\
        & = \sum_{\TSP} \sum_{\TS}  \mu_{\theta,k} (\TS)   P_{\theta, k} (\TSP|\TS) \phi (\TSP)^\top \nabla_\theta v^{\pi_\theta} \\
        & = \sum_{\TSP} \mu_{\theta,k} (\TS') \phi (\TSP)^\top \nabla_\theta v^{\pi_\theta},
    \end{split}
\end{equation*}
Then,
\begin{equation*}
    \begin{split}
        \nabla_\theta \bar{\eta}_\theta = \mathbb{E}
        \left[\delta^{\pi_\theta} (\TO) \nabla_\theta \log \pi_{\theta} (\TA|\TS) \right] + \sum_{k=1}^K \beta_k \sum_{\TS} \mu_{\theta,k}  (\TS) \Big(  \phi(\TS)^\top \nabla_\theta v^{\pi_\theta} -  \nabla_\theta \bar{V}_k^{\pi_{\theta}}(\TS) \Big)
    \end{split}
\end{equation*}
The result follows immediately.

\end{proof}

\section{Proof of Main Lemmas and Theorems of Section \ref{section:sim2real_asymptotic_convergence}}

\label{appendix:sim2real_asymptotic_convergence}
\newcommand{\PS}{{P^\theta_s}}
\newcommand{\PR}{{P^\theta_r}}
\newcommand{\DP}{{\Delta P}}
\newcommand{\mus}{{\mu_s^\theta}}
\newcommand{\mur}{{\mu_r^\theta}}
\newcommand{\dmu}{{\Delta \mu}}
\newcommand{\vs}{{\mathbf{v}_s^\theta}}
\newcommand{\vr}{{\mathbf{v}_r^\theta}}
\newcommand{\etas}{{\eta_s^\theta}}
\newcommand{\etar}{{\eta_r^\theta}}

\subsection{Proof of Theorem \ref{lemma:sim_close_to_real}}
\begin{proof}
\textbf{1.} We have a common policy to both sim and real. Thus,
\begin{equation}
    \begin{split}
        \left|P^\theta_s(s'|s) - P^\theta_r(s'|s)\right| 
        &=
        \left|\sum_{a\in A}P_s(s'|s,a)\pi_\theta(a|s) - \sum_{a\in A}P_r(s'|s,a)\pi_\theta(a|s)\right| \\
        &=
        \left|\sum_{a\in A}\pi_\theta(a|s)\left(P_s(s'|s,a) - P_r(s'|s,a)\right)\right| \\
        & \le
        \sum_{a\in A}\pi_\theta(a|s)\left|P_s(s'|s,a) - P_r(s'|s,a)\right| \\
        & \le
        |A| \epsilon_{\textrm{s2r}},\\
        \end{split}
\end{equation}
where the last inequality is due to Assumption \ref{assumption:sim2real_are_close}. 

\textbf{2.} The stationary distribution satisfies $\mus^\top \PS = \mus^\top$. Let us define $\Delta P \triangleq \PS - \PR$ and $\Delta \mu \triangleq \mus - \mur$. Then, we have
\begin{equation}
    \begin{split}
        \mus^\top (\PS - I) &= 0  \\
        (\mur + \dmu)^\top (\PS - I) &= 0  \\
        \mur^\top (\PS - I) + \dmu^\top (\PS - I) &= 0  \\
        \mur^\top (\DP + \PR - I) + \dmu^\top (\PS - I) &= 0  \\
        \mur^\top (\PR - I) + \mur^\top \DP 
        + \dmu\left( \PS - I\right) &= 0 \\
        \mur^\top \DP 
        + \dmu\left( \PS - I\right) &= 0 \\
        \dmu\left( \PS - I\right) &= - \mur^\top \DP.
    \end{split}
\end{equation}
We note that since $\PS$ satisfies Assumption \ref{ass:aperiodic}, it is of degree $|S|-1$ ($\PS$ has only one eigenvalue equals $1$, thus, $I-\PS$ has only one eigenvalue equals $0$). Without loss of generality, we define $\tilde \dmu$ to be a vector with the first $|S|-1$ entries, $\tilde \PS$ to be a sub-matrix with the first $(|S|-1)\times(|S|-1)$ entries of $\PS$, $\tilde \mur$ a vector with the first $|S|-1$ first entries of $\mur$, $\tilde \DP$ a sub-matrix with the first $(|S|-1)\times(|S|-1)$ entries of $\DP$, and $\tilde I$ to be identity matrix of dimension $S-1$. As a result, we have the following full rank equations system:
\begin{equation}
    \tilde\dmu\left( \tilde\PS - I\right) = - {\tilde\mur}^\top \tilde\DP,
\end{equation}
and the matrix $\left( \tilde\PS - I\right)$ is of full rank and invertible. Thus,
\begin{equation}
    \tilde\dmu = - {\tilde\mur}^\top \tilde\DP \left( \tilde\PS - I\right)^{-1}.
\end{equation}
We apply the Frobenius norm on both sides and get
\begin{equation}
\begin{split}
    \|\tilde\dmu\|_F 
    &= 
    \left\| {\tilde\mur}^\top \tilde\DP \left( \tilde\PS - I\right)^{-1}\right\|_F. \\
    & \le
    \left\| {\tilde\mur} \right\|_F\left\|\tilde\DP \right\|_F\left\|\left( \tilde\PS - I\right)^{-1}\right\|_F. \\    
    & \le
    1 \cdot|S|^2 \cdot\epsilon\left\|\left( \tilde\PS - I\right)^{-1}\right\|_F. \\    
    \end{split}
\end{equation}
We note that according to Assumption \ref{assumption:theta_is_compact}, $\Theta$ is compact and according to Assumption \ref{ass:aperiodic_k_mdps} the induced MC is aperiodic and irreducible. Therefore, the Frobenius norm of the latter norm (for all $\theta \in \Theta$) gets both the maximum and the minimum in $\Theta$. Using Gersgorin Theorem (\cite{horn2012matrix}; Thrm 6.1.1) on matrix $\tilde\PS - I$, and since the diagonal is greater than $1$ and for each row, the off diagonal entries sum to less than 1, all the eigenvalues of $\tilde\PS - I$ are strictly above some value $R_{G}>0$. As a result, the eigenvalues of $(\tilde\PS - I)^{-1}$ are bounded by $R_{G}^{-1}$. Using Assumption \ref{ass:aperiodic_k_mdps}, the matrix $(\tilde \PS)^{-1}$ is bounded for all $\theta\in\Theta$ by $R_{M} \triangleq \max{\theta\in\Theta}R_{G}^{-1}$, and the Frobenius of the latter norm is bounded by $\sqrt{S\cdot R_M}$.
Summarizing, $\|\tilde\dmu\| \le \epsilon |S|^2  \min_{\theta \in \Theta} \sqrt{S R_m^2}$ for the first $S-1$ states. We left with proving that for the last state, the same hold. Since $\sum_{i=1}^{|S|}\mur(i)=1$ and $\sum_{i=1}^{|S|}\mus(i)=1$, subtracting these two equations and rearranging yield
\begin{equation*}
     \mus(|S|)-\mur(|S|)= -\sum_{i=1}^{|S|-1}\left(\mus(i)-\mur(i)\right).
\end{equation*}
Applying Frobenius norm yields the desired result.

\textbf{3.} The boundedness of the average reward is immediate from part $2$, i.e.,
\begin{equation}
\begin{split}
    \|\etas - \etar \|_F 
    &= 
    \|\mur^\top r - \mus^\top r\|_F \\
    & \le 
    \|\mur^\top  - \mus^\top \|_F\cdot \|r\|_F \\
    & \le B_\eta  |S|.\\
\end{split}
\end{equation}

Similarly to part 2, the value function for sim and real are \begin{equation}
\label{eq:sim_and_real_v}
\begin{split}
\vs &= r - \eta + \PS \vs,\\
\vr &= r - \eta + \PR \vr.
\end{split}
\end{equation}
 Subtracting both yields 
\begin{equation}
    \begin{split}
        \vs - \vr  &= \PS\vs - \PR\vr. \\
    \end{split}
\end{equation}
We add and subtract $\PS \vr$ and rearrange to get
\begin{equation}
\label{eq:v_underdetermined_system}
    \begin{split}
        \vs - \vr  &= \PS\vs -\PS \vr + \PS \vr - \PR\vr \\
        \vs - \vr  &= \PS(\vs - \vr) + (\PS  - \PR)\vr \\
        (I-\PS)(\vs - \vr)  &= (\PS  - \PR)\vr. \\
    \end{split}
\end{equation}
Similarly to 2, we have an under-determined equation system. We assume that for both BEs of $\vs(s^*)=\vr(s^*)=0$ in order for  \eqref{eq:sim_and_real_v} to be each with a unique solution. Now, similarly to 2, we look at the $|S|-1$ first equations (the now has a unique solution)
\begin{equation}
(\tilde I-\tilde\PS)(\tilde\vs - \tilde\vr)  = (\tilde\PS  - \tilde\PR)\tilde\vr. 
\end{equation}
Again, similarly to 2 we get the desired result.

%\textbf{5.} The equation for the eigenvalues for the given matrices $\PS$ and $\PR$ are the following determinant $\det (\PS-\lambda I)$ and $\det (\PR-\lambda I)$, respectively. Therefore, computing the determinants yield two polynomials $\sum_{i=0}^{|S|} a^{i}_s \lambda^i$ and $\sum_{i=0}^{|S|} a^{i}_r \lambda^i$, respectively.

%Next, we bound the corresponding polynomials coefficients using $\epsilon_{\textrm{s2r}}$. We argue that $|a^{i}_s - a^{i}_r| <\epsilon_{\textrm{s2r}}|S|^4$. This is because each term in the determinant computation is a multiplication of $|S|^2$ terms in $\PS$ or $\PR$. In addition, after collecting together the different terms of the polynomials, we collect at most $|S|^2$ terms. Finally, we assume that $\epsilon_{\textrm{s2r}} ^m \le \epsilon_{\textrm{s2r}}$ for $\epsilon_{\textrm{s2r}} < 1$ and $m \ge 1$.

%Let us interpolate between $\PS$ and $\PR$ according to $P = t\PS + (1-t)\PR$ for $0 \le t \le 1$. Next we state the following result from \cite{bhatia2013matrix}, Corollary V1.1.6.
%\begin{lemma}
%Let $t \rightarrow Q(t)$ be a continuous map from the interval $I\triangle[0,1]$ into the space of $|S|\times|S|$ matrices. Then, there exist continuous functions $\lambda_1(t), \ldots, \lambda_{|S|}(t)$ that for each $t\in I$, are the eigenvalues of $Q(t)$.
%\end{lemma}
\end{proof}

\subsection{Corollary for Theorem \ref{lemma:sim_close_to_real}}

 The following corollary follows immediately from Theorem \ref{lemma:sim_close_to_real} and establishes that any convex combination of "close" enough sim and real share the same properties as both sim and real. 
\begin{corollary}
\label{corollary:s2r_close_to_real}
Assuming the same as in Theorem \ref{lemma:sim_close_to_real}, if $Y$ is a process where its dynamics can be described as $P_Y = \beta P_s(s'|s,a) + (1-\beta) P_r(s'|s,a)$ for $0 \le \beta \le 1$, then $Y$ satisfies the same properties as of Theorem \ref{lemma:sim_close_to_real} w.r.t. the real process.
\end{corollary}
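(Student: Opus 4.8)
The plan is to observe that $Y$ is itself an MDP whose transition kernel is a convex combination of those of sim and real, and that such a convex combination is automatically \emph{close} to real in the precise sense of Assumption \ref{assumption:sim2real_are_close}. Once this closeness is in hand, the three claimed properties follow by a verbatim application of Theorem \ref{lemma:sim_close_to_real}, with $Y$ playing exactly the role that sim plays there and real kept as real. So the corollary is not a fresh computation but a reduction to the theorem already proved.

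First I would check that $Y$ is a legitimate MDP meeting the standing assumptions. Since $P_Y(\cdot|s,a) = \beta P_s(\cdot|s,a) + (1-\beta) P_r(\cdot|s,a)$ is a convex combination of probability vectors, it is again a probability vector, so $Y$ is a valid MDP sharing $\mathcal{S}$, $\mathcal{A}$, and the reward $r$. For $0 < \beta < 1$ one has the entrywise domination $P_Y(s'|s,a) \ge (1-\beta) P_r(s'|s,a)$, so every transition feasible under real is feasible under $Y$; hence the chain induced by $Y$ inherits irreducibility and aperiodicity from real (Assumption \ref{ass:aperiodic}). The boundary cases $\beta = 0$ and $\beta = 1$ coincide with real and sim, which satisfy the assumption by hypothesis. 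Next I would establish closeness: for every $s,s'\in S$ and $a\in A$,
\begin{equation*}
\left|P_Y(s'|s,a) - P_r(s'|s,a)\right| = \beta \left|P_s(s'|s,a) - P_r(s'|s,a)\right| \le \beta \epsilon_{\text{s2r}} \le \epsilon_{\text{s2r}},
\end{equation*}
where the middle inequality is Assumption \ref{assumption:sim2real_are_close}. Thus the pair $(Y,\text{real})$ satisfies the closeness hypothesis of Theorem \ref{lemma:sim_close_to_real} with constant $\epsilon_{\text{s2r}}$ (indeed with the sharper $\beta\epsilon_{\text{s2r}}$).

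Finally, invoking Theorem \ref{lemma:sim_close_to_real} with $Y$ substituted for sim yields at once the induced-chain bound on $|P^\theta_Y(s'|s) - P^\theta_r(s'|s)|$, the stationary-distribution bound on $|\mu^\theta_Y(s) - \mu^\theta_r(s)|$, and the corresponding bounds on $\|\eta^\theta_Y - \eta^\theta_r\|$ and $\|\mathbf{v}^\theta_Y - \mathbf{v}^\theta_r\|$, each with the same structural form $B_P, B_\mu, B_\eta$ (the constants now evaluated along $Y$, and all scalable by the factor $\beta$). The derivation is essentially immediate; the one step that genuinely requires care, and which I therefore make explicit rather than assume, is verifying that $Y$ itself inherits irreducibility and aperiodicity, since Theorem \ref{lemma:sim_close_to_real} presumes these for the process it is applied to. This is precisely what the entrywise domination argument supplies.
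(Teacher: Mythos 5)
Your proposal is correct and follows essentially the same route as the paper: both reduce the corollary to Theorem \ref{lemma:sim_close_to_real} by noting that the convex combination $P_Y$ is at least as close to $P_r$ as $P_s$ is (the paper states this in two sentences without the explicit computation $|P_Y - P_r| = \beta|P_s - P_r| \le \epsilon_{\text{s2r}}$). Your additional verification that $Y$ inherits irreducibility and aperiodicity via the entrywise domination $P_Y \ge (1-\beta)P_r$ is a detail the paper's proof silently assumes, and it is a worthwhile inclusion.
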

\begin{proof}
The process $Y$ is a convex combination of both sim and real, therefore, the distance between $P_r$ and $P_Y$ is smaller then the distance between $P_r$ and $P_s$. Using Theorem \ref{lemma:sim_close_to_real} the result follows immediately.
\end{proof}

\section{Experiment Details of Section \ref{sec:evaluation}}
\label{app:experiments}
We trained the Fetch Push task using the DDPG algorithm \cite{lillicrap2015continuous} together with HER \cite{andrychowicz2017hindsight}. For DDPG, HER and FetchPush task we used the same hyper-parameters as in \cite{andrychowicz2017hindsight}. For completeness, we specify the hyper-parameters and task parameters used in our experiments. 

\subsection{Training procedure}
We train for 150 epochs. Each epoch consists of 50 cycles where each cycle consists of running the policy for 2 episodes per worker. Every episode consists of 50 environment time-steps. Then, 40 optimization steps are performed on mini-batches of size 256 sampled uniformly from a replay buffer consisting of $10^6$ transitions. For improved efficiency, the whole training procedure is distributed over 8 threads (workers) which average the parameters after every update. Training for 150 epochs took us approximately 2h using 8 cpu cores. The networks are optimized using the Adam optimizer \cite{kingma2014adam} with learning rate of $0.001$. We update the target networks after every cycle using the decay coefficient of $0.95$. We use the discount factor of $\gamma = 0.98$ for all transitions and we clip the targets used to train the critic to the range of possible values, i.e. $[-\frac{1}{1-\gamma}, 0]$. The behavioral policy we use for exploration works as follows. With probability $0.3$ we sample (uniformly) a random action from the hypercube of valid actions. Otherwise, we take the output of the policy network and add independently to every coordinate normal noise with standard deviation equal to $0.2$ of the total range of allowed values on this coordinate. Goals selection for HER algorithm was performed using the "future" HER strategy with $k=4$. See \cite{andrychowicz2017hindsight} for additional details.

\subsection{Networks architecture}
The architecture of the actor and critic networks is a Multi-Layer Perceptrons (MLP) with 3 hidden layers and ReLu activation function. Each layer has $256$ hidden units. The actor output layer uses the tanh activation function and is rescaled so that it lies in the range [$-5$cm, $5$cm]. We added the $L_2$ norm of the actions to the actor loss function to prevent tanh saturation and vanishing gradients (in the same way as in \cite{andrychowicz2017hindsight}). We rescale the inputs to the critic and actor networks so that they have mean zero and standard deviation equal to one and then clip them to the range $[-5, 5]$. Means and standard deviations used for rescaling are computed using all the observations encountered so far in the training.

\subsection{Task parameters}
\label{app:task_parameter}
The initial position of the gripper is fixed, located $20\text{cm}$ above the table. The initial position of the box on the table is randomized, in the $30$cm $\times$ $30$cm square with the center directly under the gripper. The width of the box is $5\text{cm}$. The goal position is sampled uniformly from the same square as the box position.

The state space is $28$-dimensional: $25$ dimensions for the gripper and box poses and velocities and $3$ for the goal position. The action space is 4-dimensional. Three dimensions specify the desired relative gripper position at the next time-step. The last dimension specifies the desired distance between the 2 fingers which are position controlled. Our task does not require gripper rotation and therefore we keep it fixed. 

\subsection{Friction values}
In our experiments, the difference between the real and sim environments is the friction between the box and the table. The friction parameter is a vector of 5 dimensions: two tangential, one torsional, two rolling. 
In our experiments, the friction values in the real environment are $[0.03, 1., 0.005, 0.0001, 0.0001]$ and the friction values in the sim environment are $[2., 2., 0.005, 0.01, 0.0001]$. The sim friction values were chosen after a preceding experiment on the friction range $[1.8, 2.2] \times [1.8, 2.2] \times [0.005] \times [0.0001, 0.1] \times [0.0001, 0.1]$, respectively, which ensured that if we train a policy only on the simulator and use this policy in the real environment, it does not solve the task. In this way we have a simulator which is close to the real world, but not identical to it, what usually happens when designing simulators for real systems such as a robotic arm. 

\subsection{Performance evaluation}
In our experiments, for each environment (real or sim) the task is solved if in the last time-step of an episode, the box position satisfies  $\| \text{box position - goal position}\|_2 \le 0.05$. 
After each training epoch, we tested in the \emph{real} environment the trained policy for 10 episodes. The test was performed separately for each one of the 8 workers. For calculating the final success rate for each epoch we averaged the local success rate form each worker. Finally, for each $q_r$ and $\beta_r$ values, and for each mixing strategy, we repeated the experiment with $10$ different random seeds.  

%%%%%%%%%%%%%%%%%%%%%%%%%%%%%%%%%%%%%%%%%%%%%%%%%%%%%%%%%%%%

\appendix

\end{document}